\documentclass[sigconf, nonacm]{acmart}

\AtBeginDocument{%
  }

\usepackage{algorithm}
\usepackage{algpseudocode}
\usepackage{amsmath}
\usepackage{multirow}
\setcitestyle{numbers,comma}

\makeatletter
\newcommand{\doi}[1]{\url{https://doi.org/#1}}
\makeatother

\begin{document}

\title{Trustworthy Protein–Ligand Binding Affinity Prediction via Reliability-Aware Multi-Engine Fusion}

\author{Yongchan Hong}
\orcid{0009-0009-8866-1690}
\affiliation{%
  \department{Department of Quantitative \& Computational Biology}
  \institution{University of Southern California}
  \city{Los Angeles}
  \state{California}
  \country{USA}
}
\email{hongyong@usc.edu}

\author{Defu Cao}
\orcid{0000-0003-0240-3818}
\affiliation{%
  \department{Department of Computer Science}
  \institution{University of Southern California}
  \city{Los Angeles}
  \state{California}
  \country{USA}
}
\email{defucao@usc.edu}

\author{Wenjin Liu}
\orcid{0000-0003-1909-8135}
\affiliation{%
  \department{Department of Quantitative \& Computational Biology}
  \institution{University of Southern California}
  \city{Los Angeles}
  \state{California}
  \country{USA}
}
\email{wenjinl@usc.edu}

\author{Thomas Ku}
\orcid{0009-0006-1647-2979}
\affiliation{%
  \department{Department of Quantitative \& Computational Biology}
  \institution{University of Southern California}
  \city{Los Angeles}
  \state{California}
  \country{USA}
}
\email{kut@usc.edu}

\author{Jordy Homing Lam}
\orcid{0000-0002-5496-6228}
\affiliation{%
  \department{Department of Chemistry}
  \institution{University of California, Berkeley}
  \city{Berkeley}
  \state{California}
  \country{USA}
}
\email{jhml@berkeley.edu}

\author{Emily Nguyen}
\orcid{0000-0003-4917-7336}
\affiliation{%
  \department{Department of Computer Science}
  \institution{University of Southern California}
  \city{Los Angeles}
  \state{California}
  \country{USA}
}
\email{emilyn98@usc.edu}

\author{Willie Neiswanger}
\orcid{0000-0002-9619-5572}
\affiliation{%
  \department{Department of Computer Science}
  \institution{University of Southern California}
  \city{Los Angeles}
  \state{California}
  \country{USA}
}
\email{neiswang@usc.edu}

\author{Vsevolod Katritch}
\orcid{0000-0003-3883-4505}
\authornote{Corresponding authors. This work has been accepted for publication in the Proceedings of the 32nd ACM SIGKDD Conference on Knowledge Discovery and Data Mining (KDD 2026), August 9--13, 2026, Jeju, South Korea.}
\affiliation{%
  \department{Department of Quantitative \& Computational Biology}
  \institution{University of Southern California}
  \city{Los Angeles}
  \state{California}
  \country{USA}
}
\email{katritch@usc.edu}

\author{Yan Liu}
\orcid{0000-0002-7055-9518}
\authornotemark[1]
\affiliation{%
  \department{Department of Computer Science}
  \institution{University of Southern California}
  \city{Los Angeles}
  \state{California}
  \country{USA}
}
\email{yanliu.cs@usc.edu}

\renewcommand{\shortauthors}{Yongchan Hong et al.}

\begin{abstract}
 Accurate protein–ligand binding affinity prediction is central to computational drug discovery, yet modern docking engines frequently disagree without indicating which prediction to trust. Consensus scoring and ensemble methods improve mean accuracy but treat all predictions identically without interpretable confidence measures or uncertainty decomposition, ignoring the chemical context of each protein–ligand pair.
 To address this limitation, we introduce \textbf{RELIABLE-BA} (\textbf{RELIAB}i\textbf{L}ity-aware \textbf{E}vidential fusion for \textbf{B}inding \textbf{A}ffinity), an evidential framework for multi-engine binding affinity prediction. Our model comprises three steps: (1) modeling each engine as an evidential expert via Normal–Inverse-Gamma distributions, (2) scaling epistemic uncertainty through learned reliability from molecular context while preserving each expert's predictive mean, and (3) fusing experts through closed-form aggregation that captures both individual uncertainty and inter-engine disagreement. Experiments on the PDBBind and BDB2020+ benchmarks demonstrate competitive point prediction with substantially improved uncertainty calibration, and additional validation on the SARS-CoV-2 Mpro dataset and 5HT2A receptor demonstrates applicability to clinically relevant drug targets. Crucially, these uncertainty estimates enable reliable filtering of protein-ligand pairs, reducing prediction error by up to ~25\% when retaining only high-confidence pairs. To our knowledge, RELIABLE-BA is the first multi-engine binding affinity prediction framework to combine evidential fusion with context-dependent reliability, offering a principled path toward trustworthy AI-guided drug discovery. Our code is publicly available at \href{https://github.com/yongchand/RELIABLE-BA}{https://github.com/yongchand/RELIABLE-BA}. 
 
\end{abstract}

\begin{CCSXML}
<ccs2012>
   <concept>
       <concept_id>10010147.10010178</concept_id>
       <concept_desc>Computing methodologies~Artificial intelligence</concept_desc>
       <concept_significance>500</concept_significance>
       </concept>
   <concept>
       <concept_id>10010147.10010341.10010342.10010345</concept_id>
       <concept_desc>Computing methodologies~Uncertainty quantification</concept_desc>
       <concept_significance>500</concept_significance>
       </concept>
   <concept>
       <concept_id>10010405.10010444.10010450</concept_id>
       <concept_desc>Applied computing~Bioinformatics</concept_desc>
       <concept_significance>500</concept_significance>
       </concept>
 </ccs2012>
\end{CCSXML}

\ccsdesc[500]{Computing methodologies~Artificial intelligence}
\ccsdesc[500]{Computing methodologies~Uncertainty quantification}
\ccsdesc[500]{Applied computing~Bioinformatics}
\keywords{Binding Affinity Prediction, Uncertainty Quantification, Evidential Regression, Computational Drug Discovery}


\maketitle

\begin{figure}[t]
\centering
\includegraphics[width=0.8\columnwidth]{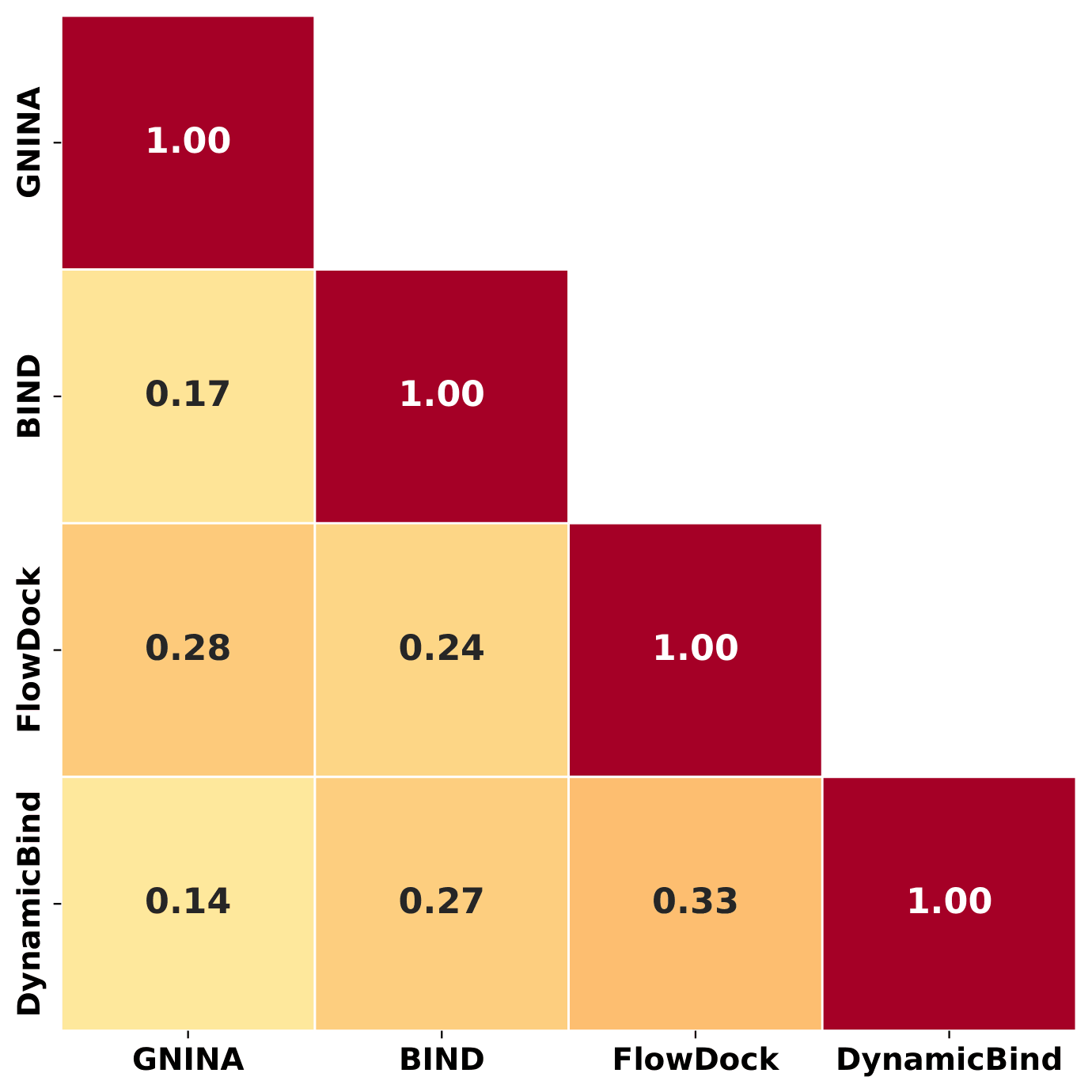}
\caption{Pairwise error correlations between four binding affinity prediction engines on PDBbind. Prediction error is defined as the absolute difference between predicted and experimental binding affinity ($pK_d$/$pK_i$). Spearman correlation coefficients range from 0.14 to 0.33.}
\Description{Error correlation matrix for binding affinity prediction engines.}
\label{fig:error_correlation_matrix}
\end{figure}

\section{Introduction}
Protein–ligand binding affinity prediction is a core component of computational drug discovery, enabling efficient hit discovery and substantially reducing the number of candidate compounds for experimental evaluation~\cite{lam2025navigating}. While physics-based methods such as free energy perturbation (FEP) achieve high accuracy, they require extensive sampling and are computationally prohibitive for large-scale screening~\cite{wang2015accurate}. Machine learning (ML) approaches offer faster inference with competitive accuracy~\cite{cao2026pinfdit}. Boltz-2 approaches FEP-level accuracy with Pearson correlations of 0.62–0.66 on standard benchmarks~\cite{passaro2025boltz2}, demonstrating that deep learning can meaningfully accelerate therapeutic discovery~\cite{Panahandeh2025}. Modern ML-based affinity prediction engines demonstrate distinct strengths. Physics-based methods capture fundamental energetics, structure-based models exploit geometric features, and sequence-based approaches generalize across protein families without requiring 3D structures~\cite{mcnutt2021gnina, bind, lu2024dynamicbind}. Nevertheless, protein–ligand binding is inherently complex, involving subtle interplay of electrostatics, solvation, entropy, and conformational dynamics that no single model fully captures~\cite{buttenschoen2024posebusters}. As a result, even state-of-the-art engines exhibit complementary failure modes: Figure ~\ref{fig:error_correlation_matrix} shows that pairwise error correlations between engines are weak (Spearman $\rho$ = 0.14-0.33), indicating that different engines succeed on different subsets of complexes. This complementarity suggests that fusing existing engines offers a more efficient path to robust predictions than building a single universal model from scratch.

Building on this complementarity strength, practitioners employ multiple docking tools to improve robustness~\cite{Scardino_2022_consensus_survey, Ericksen_2017_MLconsensus}. Existing multi-engine strategies fall into two paradigms: consensus scoring, which combines predictions using heuristic rules such as rank aggregation or score normalization~\cite{Yang_2005_consensus, NhatPhuong_2023_consensus}, and ensemble learning, which aggregates outputs via averaging or meta-learning~\cite{Ballester_2010_RFscore, Uddin_2024_EBA, Rayka_2024_ENSscore}. While these approaches can improve mean accuracy, they are primarily designed to optimize aggregate performance, rather than to reason about engine-specific reliability for individual protein–ligand complexes.

To reason about which predictions to rely on in a given protein–ligand complex, uncertainty quantification (UQ) characterizes predictive confidence rather than producing point estimates alone. This is essential for trustworthy decision-making in drug discovery, where models are frequently applied to novel scaffolds or targets outside the training distribution~\cite{Rayka_2025_UQ_binding}. Calibrated uncertainty estimates enable practitioners to prioritize high-confidence predictions for experimental validation, thereby reducing costly false positives. In active learning settings, uncertainty guides the selection of informative compounds for iterative screening~\cite{Graff_2021_active_learning}. A key distinction is between aleatoric uncertainty, arising from inherent measurement noise that cannot be reduced, and epistemic uncertainty, reflecting limited model knowledge that could be addressed with additional data or improved models~\cite{Kendall_2017_uncertainties,cao2023estimating}. Without this decomposition, practitioners cannot determine whether low confidence stems from inherent target difficulty or correctable gaps in model coverage.

Recent advances in uncertainty quantification provide principled frameworks for single-model settings. Evidential regression using Normal–Inverse-Gamma (NIG) distributions enables closed-form uncertainty decomposition ~\cite{amini2020deep}, and has shown promise in drug discovery applications~\cite{EviDTI_2025_NatComm, Xu_2025_UAMRL}. However, these methods address uncertainty within a single model and do not account for the additional epistemic uncertainty arising from disagreement among engines. Binding affinity engines are not exchangeable as they rely on different assumptions, scoring functions, and training data with varying coverage~\cite{Wang_2016_comprehensive}. Existing single-model UQ methods cannot capture this inter-engine disagreement and determine which engine to trust for a given protein–ligand complex.

To address the above mentioned challenges, we introduce RELIABLE-BA (RELIABiLity-aware Evidential fusion for Binding Affinity). RELIABLE-BA models each docking engine as an evidential expert outputting a NIG distribution that captures both aleatoric and epistemic uncertainty. A reliability network learns context-dependent trustworthiness scores conditioned on protein–ligand features, which modulate each expert's epistemic uncertainty through a principled scaling transformation while preserving predictive means. The reliability-scaled experts are then fused via closed-form Mixture of Normal–Inverse-Gamma (MoNIG) aggregation~\cite{ma2021trustworthy}, yielding calibrated predictions that reflect both individual engine uncertainty and inter-engine disagreement.

We evaluate RELIABLE-BA on PDBbind~\cite{wang2005pdbbind} using a time-split protocol, with additional validation on BDB2020+~\cite{Li_2026_LPPDBBind}, an independent test set with zero training overlap, and a case study on the SARS-CoV-2 Mpro dataset and 5HT2A receptor. Across all settings, RELIABLE-BA achieves competitive point prediction while substantially improving uncertainty calibration. Selective prediction analyses demonstrate consistent error reduction at low coverage, and epistemic uncertainty correlates with inter-engine disagreement while point predictions remain stable. These properties make RELIABLE-BA well-suited for multi-engine virtual screening pipelines where calibrated uncertainty is needed to prioritize high-confidence predictions before committing experimental resources.

In summary, this paper makes the following contributions:
\begin{enumerate}
\item We propose \textbf{RELIABLE-BA}, the first multi-engine binding affinity framework combining evidential uncertainty quantification with context-dependent reliability modeling.
\item We introduce a reliability-scaling mechanism that modulates epistemic uncertainty while preserving each expert's predictive mean, enabling coherent fusion via MoNIG aggregation.
\item Through experiments on PDBbind, BDB2020+, and both SARS-CoV-2 Mpro dataset and 5HT2A receptor case study, we demonstrate superior uncertainty calibration and up to 25.7\% MAE reduction via selective prediction.
\end{enumerate}

Together, these results position RELIABLE-BA as a practical approach to uncertainty-aware virtual screening in drug discovery. Accurate binding affinity prediction with calibrated uncertainty is particularly critical for challenging therapeutic targets such as GPCRs, which account for over 34\% of FDA-approved drug targets yet remain difficult to model due to their conformational flexibility and limited representation in binding affinity training data~\cite{Hauser_2017_GPCR}. Our 5HT2A receptor case study confirms that RELIABLE-BA maintains strong performance on this clinically relevant target class. To prove its usage in real world, we further validated the method on a SARS-CoV-2 Mpro dataset, a clinically meaningful dataset known to be independent of the training data \cite{Li_2026_LPPDBBind}.

\section{Related Work}
\subsection{Binding Affinity Prediction}

Binding affinity quantifies the strength of interaction between a protein and a ligand, typically expressed as dissociation constants ($K_d$, $K_i$) or their negative logarithms (p$K_d$, p$K_i$) ~\cite{landrum2024combining}. Accurate prediction is essential for computational drug discovery, as experimental validation is costly and time-consuming, allowing only a small fraction of candidate compounds to be tested~\cite{lam2025navigating}.

Existing approaches span a wide spectrum of modeling assumptions. Physics-based approaches include molecular docking with empirical scoring functions, such as AutoDock Vina ~\cite{trott2010autodock} and Glide ~\cite{friesner2004glide},  as well as free energy perturbation (FEP) ~\cite{wang2015accurate}. While docking methods enable efficient large-scale screening, FEP achieves substantially higher accuracy ($\sim$1 kcal/mol error) at the cost of extensive sampling and prohibitive computational expense.

Machine learning approaches offer faster inference with competitive accuracy. Structure-based methods include 3D convolutional networks that operate on voxelized protein--ligand complexes (GNINA~\cite{mcnutt2021gnina}, KDEEP~\cite{jimenez2018kdeep}, Pafnucy~\cite{stepniewska2018development}). Sequence-based models such as BIND~\cite{bind} combine protein language model embeddings with ligand graph neural networks, avoiding dependence on explicit 3D structures. More recently, generative docking methods based on diffusion (DynamicBind~\cite{lu2024dynamicbind}) and flow matching (FlowDock~\cite{flowdock}) jointly predict binding poses and affinities.

Despite this diversity, each approach carries distinct biases. Physics-based docking relies primarily on structural information, making it less dependent on ligand data coverage but limited in capturing complex energetics. Structure-based ML models are sensitive to pose quality, while sequence-based models may miss geometric interactions between protein and ligand~\cite{bind}. Critically, most methods output only point estimates without calibrated confidence scores, leaving practitioners unable to distinguish reliable predictions from unreliable ones.

\subsection{Aggregation of Affinity Prediction Engines}
A common strategy for improving binding affinity prediction is to aggregate outputs from multiple engines. Prior work has explored two main paradigms: consensus scoring and ensemble learning, motivated by the observation that no single engine consistently outperforms others across all target classes~\cite{Wang_2016_comprehensive}.

Consensus scoring aggregates predictions from independent docking or scoring functions using heuristic rules such as rank-based aggregation or score normalization~\cite{Yang_2005_consensus, NhatPhuong_2023_consensus, Scardino_2022_consensus_survey}. ~\citeauthor{Ericksen_2017_MLconsensus} showed that machine learning consensus scoring outperforms naive combinations, but also found that the best-performing engine varies by target, making it difficult to know which engines to trust a priori.

Ensemble learning formulates aggregation within a machine learning framework. ~\citeauthor{Ballester_2010_RFscore} pioneered machine learning scoring functions using Random Forests ~\cite{Ballester_2010_RFscore}, while recent work has explored deep learning ensembles~\cite{Uddin_2024_EBA} and confidence estimation through ensemble disagreement~\cite{Rayka_2024_ENSscore}.

Recent advances in uncertainty quantification offer principled alternatives~\cite{Rayka_2025_UQ_binding}. Evidential methods have demonstrated practical impact: \citeauthor{Soleimany_2021_evidential} \cite{Soleimany_2021_evidential} showed that evidential uncertainty improved virtual screening hit rates, and EviDTI~\cite{EviDTI_2025_NatComm} validated uncertainty-guided predictions with experimental binding assays. UAMRL employed Normal-Inverse-Gamma distributions for uncertainty-aware fusion of multiple modalities within a single model ~\cite{Xu_2025_UAMRL}.

Despite these advances, existing methods either apply uniform weighting across engines or fuse modalities within a single model, without learning context-dependent reliability for each prediction source. In practice, different engines exhibit sample-level biases, which is a gap RELIABLE-BA can address through learned reliability scaling.

\subsection{Uncertainty Quantification and Evidential Learning}

Uncertainty quantification (UQ) characterizes predictive confidence rather than producing point estimates alone. A key distinction is between aleatoric uncertainty, arising from inherent data noise, and epistemic uncertainty, reflecting limited model knowledge~\cite{Kendall_2017_uncertainties}. Common approaches of UQ include deep ensembles~\cite{Lakshminarayanan_2017_ensembles}, Monte Carlo Dropout~\cite{gal2016dropout}, SWA-Gaussian~\cite{maddox2019simple}, and sparse variational Gaussian processes~\cite{hensman2015scalable}.

Another common approach is evidential regression, which is a probabilistic framework in which a neural network predicts the parameters of a Normal--Inverse-Gamma (NIG) prior over the mean and variance of a Gaussian likelihood, yielding closed-form uncertainty decomposition without sampling or ensembles~\cite{amini2020deep}. 

~\citeauthor{ma2021trustworthy} \cite{ma2021trustworthy} introduced the Mixture of NIG (MoNIG) framework for multi-source evidential fusion, combining NIG distributions through a closed-form summation operator that accumulates both within-expert uncertainty and inter-expert disagreement. While MoNIG provides a principled aggregation rule, existing applications rely on static weighting schemes that do not adapt to input-dependent predictor reliability.

\section{RELIABLE-BA: RELIABiLity-aware Evidential fusion for Binding Affinity}
\label{sec:RELIABLE-BA}

\begin{figure*}[t]
  \centering
  \includegraphics[width=\textwidth]{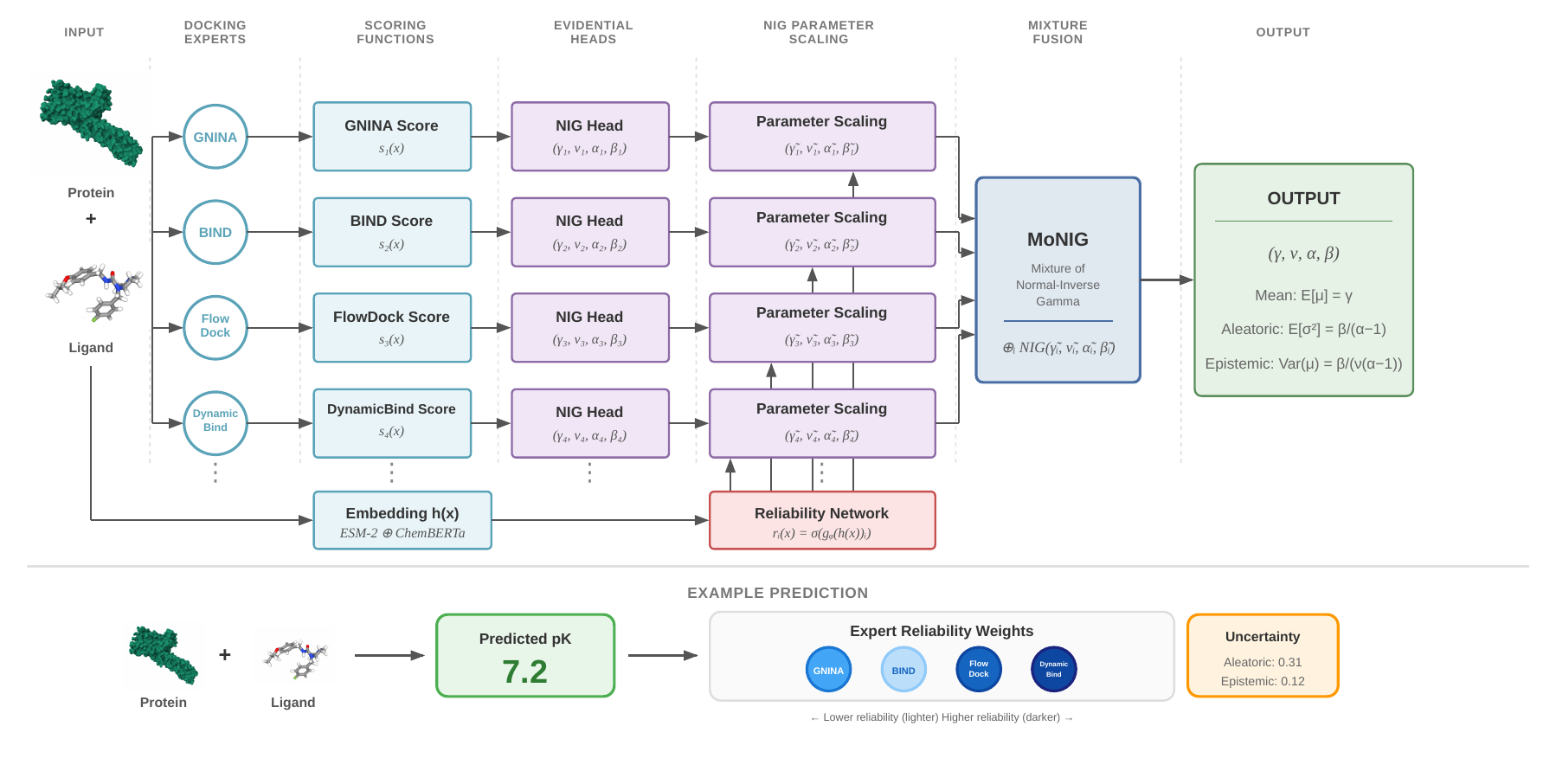}
  \caption{Overview of the RELIABLE-BA architecture. Evidential heads map each engine's score to NIG parameters, which are modulated by context-dependent reliability scores and fused via MoNIG aggregation. An example prediction illustrates per-engine reliability weights (darker = higher reliability) alongside decomposed uncertainty estimates.}
  \Description{Architecture of RELIABLE-BA.}
  \label{fig:RELIABLE-BA_overview}
\end{figure*}

\subsection{Problem Setup}
\label{sec:problem_setup}

We consider the problem of predicting protein--ligand binding affinity by integrating outputs from multiple binding affinity prediction engines.
Let $x$ denote a protein--ligand complex, and $y \in \mathbb{R}$ the ground-truth binding affinity.
Assume $M$ docking engines are available, each producing a scalar score $s_i(x)$ for engine $i$.
In addition, a shared representation $\mathbf{h}(x) \in \mathbb{R}^d$ is extracted by concatenating frozen ESM-2 protein embeddings ~\cite{lin2022language} and ChemBERTa ligand embeddings ~\cite{chithrananda2020chemberta}.
The goal is to output a predictive distribution $p(y \mid x)$ that captures both the predicted value and its associated uncertainty, rather than producing a single point estimate.

\subsection{Evidential Expert Modeling}
\label{sec:evidential_expert}
Each engine is modeled as an evidential expert that outputs a probabilistic prediction over binding affinity.
For engine $i$, we assume a Gaussian likelihood
\begin{equation}
y \mid \mu, \sigma^2 \sim \mathcal{N}(\mu, \sigma^2),
\end{equation}
with a Normal--Inverse-Gamma (NIG) prior ~\cite{amini2020deep} placed over the unknown mean and variance,
\begin{equation}
(\mu, \sigma^2) \sim \mathrm{NIG}(\gamma, \nu, \alpha, \beta).
\end{equation}
Rather than treating these as fixed parameters, we learn the NIG parameters via a neural network.
Specifically, for each engine $i$, an MLP $f_\theta^{(i)}$ maps the engine score $s_i(x)$ 
to the four evidential parameters:
\begin{equation}
(\gamma_i, \nu_i, \alpha_i, \beta_i) = f_\theta^{(i)}\big(s_i(x)\big).
\end{equation}

We enforce $\nu_i > 0$ and $\beta_i > 0$ via softplus activation, and $\alpha_i > 1$ via $\alpha_i = 1 + \mathrm{softplus}(\cdot)$ to ensure the expected variance $\mathbb{E}[\sigma^2]$ remains finite.

The four parameters admit the following interpretation: $\gamma_i$ represents the predicted mean, $\nu_i$ controls confidence in the mean estimate (analogous to pseudo-observation count), while $\alpha_i$ and $\beta_i$ govern the shape and scale of uncertainty over the variance.
From these parameters, the expected prediction and uncertainty components are
\begin{align}
\mathbb{E}[\mu] &= \gamma_i, \\
\mathbb{E}[\sigma^2] &= \frac{\beta_i}{\alpha_i - 1}, \\
\mathrm{Var}(\mu) &= \frac{\beta_i}{\nu_i(\alpha_i - 1)}.
\end{align}
We interpret $\mathbb{E}[\mu]$ as the point prediction, $\mathbb{E}[\sigma^2]$ as aleatoric uncertainty (irreducible data noise), and $\mathrm{Var}(\mu)$ as epistemic uncertainty (model uncertainty due to limited evidence).

\subsection{Reliability-Aware Uncertainty Modeling}
\label{sec:reliability_network}
\paragraph{Network}

Since each docking engine operates on different mechanisms with unique inductive biases, context-dependent reliability will differ across protein--ligand complexes.
To model this behavior, RELIABLE-BA introduces a reliability network that estimates predictive trustworthiness conditioned on the input.
The reliability network takes the shared embedding $\mathbf{h}(x)$ as input and returns a reliability score for each engine,
\begin{equation}
r_i(x) = \sigma\big(g_\phi(\mathbf{h}(x))_i\big), \quad r_i(x) \in (0,1),
\end{equation}
where $g_\phi$ is an MLP and $\sigma$ denotes the sigmoid function.
The output $r_i(x)$ reflects how trustworthy engine $i$ is for the given complex.
The scores quantify predictive trustworthiness conditioned on molecular context, reflecting factors such as protein family, binding pocket geometry, and structural complexity.

\paragraph{ Scaling}
\label{sec:reliability_scaling}

The reliability scores are used to modulate epistemic uncertainty without altering each expert’s predictive bias.
Specifically, RELIABLE-BA applies reliability scaling to the evidential parameters as
\begin{align}
\tilde{\gamma}_i &= \gamma_i, \\
\tilde{\nu}_i &= r_i(x)\, \nu_i, \\
\tilde{\alpha}_i &= r_i(x)\, \alpha_i + \big(1 - r_i(x)\big), \\
\tilde{\beta}_i &= r_i(x)\, \beta_i.
\end{align}
This transformation preserves each expert’s predicted mean while reducing its effective evidence when reliability is low.
Consequently, epistemic uncertainty increases for unreliable experts, while the expected aleatoric variance remains unchanged under the NIG parameterization.
We ensure numerical validity by constraining $r_i(x)$ away from zero, guaranteeing $\tilde{\nu}_i>0$, $\tilde{\beta}_i>0$, and $\tilde{\alpha}_i>1$.

\subsection{MoNIG Aggregation}
\label{sec:monig}

The reliability-scaled experts are fused using a Mixture of Normal--Inverse-Gamma (MoNIG) aggregation.
Let $(\tilde{\gamma}_i, \tilde{\nu}_i, \tilde{\alpha}_i, \tilde{\beta}_i)$ denote the reliability-scaled evidential parameters produced by engine $i$.

Following prior work of MoNIG ~\cite{ma2021trustworthy}, we aggregate multiple Normal--Inverse-Gamma distributions using the NIG summation operator, denoted by $\oplus$, which defines a closed-form fusion rule that preserves conjugacy.
The fusion of $M$ scaled experts is given by
\begin{equation}
\mathrm{NIG}(\gamma,\nu,\alpha,\beta)
=
\bigoplus_{i=1}^M
\mathrm{NIG}(\tilde{\gamma}_i,\tilde{\nu}_i,\tilde{\alpha}_i,\tilde{\beta}_i).
\label{eq:nig_sum}
\end{equation}

The fused mean is computed as an evidence-weighted average,
\begin{equation}
\gamma
=
\frac{\sum_{i=1}^M \tilde{\nu}_i \tilde{\gamma}_i}{\sum_{i=1}^M \tilde{\nu}_i},
\label{eq:fused_mean}
\end{equation}
with total evidence
\begin{equation}
\nu = \sum_{i=1}^M \tilde{\nu}_i .
\label{eq:fused_nu}
\end{equation}

The remaining Normal--Inverse-Gamma parameters are aggregated according to the NIG summation operator.
Specifically, the fused shape parameter is
\begin{equation}
\alpha
=
\sum_{i=1}^M \tilde{\alpha}_i
+
\frac{M-1}{2},
\label{eq:fused_alpha}
\end{equation}
and the fused scale parameter is
\begin{equation}
\beta
=
\sum_{i=1}^M \tilde{\beta}_i
+
\frac{1}{2}
\sum_{i=1}^M
\tilde{\nu}_i
\bigl(\tilde{\gamma}_i - \gamma\bigr)^2 .
\label{eq:fused_beta}
\end{equation}

This aggregation accumulates both within-expert uncertainty and disagreement among expert means, with the second term in Eq.~\eqref{eq:fused_beta} increasing epistemic uncertainty when engines provide conflicting predictions.
While the correctness of the NIG summation operator without reliability scaling has been established in prior work, we provide a formal proof in Appendix \ref{appendix:proof} that the summation remains valid under the proposed reliability scaling.

Given the fused Normal--Inverse-Gamma parameters, predictive uncertainty decomposes into aleatoric and epistemic components as
\begin{equation}
\underbrace{\mathbb{E}[\sigma^2]}_{\text{aleatoric}}
=
\frac{\beta}{\alpha - 1},
\qquad
\underbrace{\mathrm{Var}(\mu)}_{\text{epistemic}}
=
\frac{\beta}{\nu(\alpha - 1)}.
\end{equation}

\subsection{Joint Evidential Learning Objective}
\label{sec:overall_learning}

RELIABLE-BA is trained end-to-end with joint supervision at both the expert and fused levels (Figure~\ref{fig:RELIABLE-BA_overview}).
For each complex $x$ with ground-truth binding affinity $y$, engine-specific branches output evidential parameters
$m_i(x) = (\gamma_i, \nu_i, \alpha_i, \beta_i)$.

We define the overall training objective as the sum of expert-level evidential losses and the fused-level evidential loss:
\begin{equation}
\mathcal{L}(w)
=
\sum_{i=1}^{M}\mathcal{L}_{i}(w)
+
\mathcal{L}_{\mathrm{MoNIG}}(w)
,
\label{eq:overall_loss}
\end{equation}
where each expert loss is given by
\begin{equation}
\mathcal{L}_{i}(w)
=
\mathcal{L}^{\mathrm{NLL}}_{i}(w)
+
\lambda\,\mathcal{L}^{\mathrm{R}}_{i}(w),
\label{eq:expert_loss}
\end{equation}
and the fused loss is
\begin{equation}
\mathcal{L}_{\mathrm{MoNIG}}(w)
=
\mathcal{L}^{\mathrm{NLL}}_{\mathrm{MoNIG}}(w)
+
\lambda\,\mathcal{L}^{\mathrm{R}}_{\mathrm{MoNIG}}(w).
\label{eq:fused_loss}
\end{equation}

Here, $\mathcal{L}^{\mathrm{NLL}}_{i}$ and $\mathcal{L}^{\mathrm{NLL}}_{\mathrm{MoNIG}}$ denote the negative log marginal likelihoods under the expert-level and fused Student-$t$ predictive distributions, respectively.
The regularization terms $\mathcal{L}^{\mathrm{R}}_{i}$ and $\mathcal{L}^{\mathrm{R}}_{\mathrm{MoNIG}}$ follow standard evidential regression and penalize excessive evidence when prediction errors are large.
The coefficient $\lambda$ controls the trade-off between data fit and uncertainty calibration.

\section{Experiments}
We organize our empirical analysis around the following research questions (RQs):

\begin{itemize}
\item \textbf{RQ1}: Does RELIABLE-BA show competitive point prediction accuracy over individual docking engines and aggregation baselines?
\item \textbf{RQ2}: Does RELIABLE-BA produce better-calibrated and more informative uncertainty estimates?
\item \textbf{RQ3}: Does RELIABLE-BA support risk-aware decision making through selective prediction?
\item \textbf{RQ4}: How do reliability modeling and expert disagreement contribute to RELIABLE-BA’s performance?
\end{itemize}
\subsection{Dataset and Experimental Setup}

\subsubsection{Dataset and Evaluation Protocol}
We evaluate RELIABLE-BA on the PDBbind 2020 benchmark ~\cite{wang2005pdbbind} using a time-based split to prevent temporal leakage, which was originally curated by prior works~\cite{lu2024dynamicbind, flowdock, stark2022equibind}.
Within this split, we exclude IC50 affinity measurements, as IC50 values are assay-dependent and not directly comparable to thermodynamic binding constants ($K_i$/$K_d$) without knowledge of assay conditions~\cite{Kalliokoski2013, landrum2024combining}. We have also utilized BDB2020+ \cite{Li_2026_LPPDBBind}, which is an independent, challenging dataset for fair benchmarking. Dataset details are included in Appendix ~\ref{appendix:dataset}.

Protein and ligand inputs are converted into fixed-dimensional representations using ESM-2 ($\texttt{esm2\_t6\_8M\_UR50D}$) \cite{lin2022language} for proteins and ChemBERTa ($\texttt{ChemBERTa-77M-MTR}$) \cite{chithrananda2020chemberta} for ligands. All methods operate on identical 704-dimensional protein--ligand embeddings.

\subsubsection{Evidential Expert Models}
RELIABLE-BA integrates four docking engines as evidential experts, selected to maximize methodological diversity and complementary inductive biases:

\begin{itemize}
\item \textbf{GNINA }(Version 1.3): A Vina-derived docking engine that performs Markov chain Monte Carlo (MCMC) pose sampling followed by CNN-based rescoring on grid-based protein--ligand representations ~\cite{mcnutt2021gnina}.
\item \textbf{BIND }(Version 1.4): A sequence-based affinity predictor that combines a ligand graph neural network with protein language model embeddings (ESM-2) via cross-attention ~\cite{bind}.
\item \textbf{FlowDock }(Version 0.0.3): A geometric generative docking method based on conditional flow matching, learning continuous transformations from apo to holo protein--ligand complexes ~\cite{flowdock}.
\item \textbf{DynamicBind }(Version 1.0): An E(3)-equivariant diffusion-based docking model that learns smooth energy landscapes for ligand-specific binding ~\cite{lu2024dynamicbind}.
\end{itemize}

These experts differ substantially in input modality, architectural assumptions, and algorithms, making them well-suited for uncertainty-aware fusion. Note that the framework is modular and additional non-redundant docking engines can be incorporated as new evidential experts without architectural changes.

\subsubsection{Baselines}
We compare RELIABLE-BA against a comprehensive set of uncertainty-aware baselines, including: (i) Deterministic MLP regression, (ii) Single evidential regression (NIG) \cite{amini2020deep} (iii) heteroscedastic Gaussian regression \cite{Kendall_2017_uncertainties}, (iv) MC Dropout \cite{gal2016dropout}, (v) Deep Ensembles with mean--variance estimation \cite{Lakshminarayanan_2017_ensembles}, (vi) SWA-Gaussian (SWAG) \cite{maddox2019simple}, and (vii) Sparse Variational Gaussian Process (SVGP) \cite{hensman2015scalable}.
We have also compared with traditional aggregation methods including consensus scoring, ensemble scoring, learned weight ensemble (LWE), and softmax mixture of experts (Softmax MoE). Detailed architectural choices and hyperparameter settings for each method are provided in Appendix \ref{appendix:baselines}.

\subsubsection{Evaluation Metrics}
We assess performance along two complementary dimensions:

\textbf{Point prediction accuracy}: Mean Absolute Error (MAE), Root Mean Squared Error (RMSE), Pearson correlation, and coefficient of determination ($R^2$).

\textbf{Uncertainty quality}: Expected Calibration Error (ECE), Continuous Ranked Probability Score (CRPS), and Negative Log-Likelihood (NLL). Uncertainty quality metrics were calculated using Uncertainty Toolbox \cite{chung2021uncertainty}. 

\subsection{Experimental Results}
\begin{table*}[t]
\centering
\caption{Point prediction and uncertainty evaluation on PDBbind time-split benchmark. Results averaged over 20 seeds (± standard deviation). Best results in \textbf{bold}, second best \underline{underlined} among aggregation methods.}
\small
\resizebox{\textwidth}{!}{%
\begin{tabular}{lccccccc}
\toprule
\textbf{Method} & \textbf{MAE} $\downarrow$ & \textbf{RMSE} $\downarrow$ & \textbf{Corr} $\uparrow$ & $\mathbf{R^2}$ $\uparrow$ & \textbf{ECE} $\downarrow$ & \textbf{CRPS} $\downarrow$ & \textbf{NLL} $\downarrow$ \\
\midrule
\multicolumn{8}{l}{\textit{Individual Scoring Engines}} \\
DynamicBind & 0.8095 & 1.0840 & 0.8507 & 0.6806 & - & - & - \\
FlowDock & 1.0155 & 1.2776 & 0.7590 & 0.5563 & - & - & - \\
GNINA & 1.2609 & 1.5621 & 0.6646 & 0.3367 & - & - & - \\
BIND & 1.3898 & 1.7417 & 0.5415 & 0.1754 & - & - & - \\
\midrule
\multicolumn{8}{l}{\textit{Traditional Aggregation Methods}} \\
Consensus Scoring & 0.9356 & 1.1896 & 0.7912 & 0.6153 & - & - & - \\
Ensemble Scoring & 0.9231 & 1.1533 & 0.8148 & 0.6384 & - & - & - \\
LWE & 0.8380 & 1.0750 & 0.8350 & 0.6860 & - & - & - \\
Softmax MoE & 0.8610 ± 0.0200 & 1.0890 ± 0.0180 & 0.8260 ± 0.0050 & 0.6780 ± 0.0110 & - & - & - \\
\midrule
\multicolumn{8}{l}{\textit{Learned Aggregation Methods}} \\
Baseline & 0.8630 ± 0.0287 & 1.0991 ± 0.0360 & 0.8267 ± 0.0107 & 0.6713 ± 0.0214 & - & - & - \\
Gaussian & 0.8543 ± 0.0230 & 1.0961 ± 0.0303 & 0.8309 ± 0.0085 & 0.6732 ± 0.0182 & \underline{0.0376 ± 0.0164} & 0.6153 ± 0.0164 & 1.5272 ± 0.0291 \\
MC Dropout & \underline{0.8435 ± 0.0242} & \underline{1.0796 ± 0.0256} & 0.8290 ± 0.0081 & \underline{0.6830 ± 0.0151} & 0.1009 ± 0.0211 & 0.6286 ± 0.0145 & 1.5893 ± 0.0249 \\
Deep Ensemble & 0.8444 ± 0.0246 & 1.0887 ± 0.0324 & \underline{0.8344 ± 0.0057} & 0.6775 ± 0.0196 & 0.0540 ± 0.0130 & \underline{0.6134 ± 0.0165} & 1.5290 ± 0.0243 \\
SVGP & 0.8680 ± 0.0325 & 1.1162 ± 0.0320 & 0.8184 ± 0.0094 & 0.6610 ± 0.0195 & 0.0810 ± 0.0311 & 0.6416 ± 0.0166 & 1.5971 ± 0.0310 \\
SWAG & 0.8549 ± 0.0241 & 1.0961 ± 0.0281 & 0.8285 ± 0.0090 & 0.6732 ± 0.0169 & 0.0395 ± 0.0147 & 0.6155 ± 0.0151 & \underline{1.5271 ± 0.0248} \\
Evidential Regression & 0.8595 ± 0.0297 & 1.1109 ± 0.0359 & 0.8279 ± 0.0062 & 0.6642 ± 0.0221 & 0.0518 ± 0.0265 & 0.6227 ± 0.0196 & 1.5381 ± 0.0335 \\
\midrule
\multicolumn{8}{l}{\textit{Ours}} \\
\textbf{RELIABLE-BA} & \textbf{0.8247 ± 0.0137} & \textbf{1.0473 ± 0.0203} & \textbf{0.8457 ± 0.0031} & \textbf{0.7017 ± 0.0117} & \textbf{0.0149 ± 0.0055} & \textbf{0.5856 ± 0.0113} & \textbf{1.4664 ± 0.0237} \\
\bottomrule
\end{tabular}%
}
\label{tab:main}
\end{table*}

\begin{table*}[t]
\centering
\caption{Point prediction and uncertainty evaluation on BDB2020+ independent test set. Results averaged over 20 seeds (± standard deviation). Best results in \textbf{bold}, second best \underline{underlined} among aggregation methods.}
\small
\resizebox{\textwidth}{!}{%
\begin{tabular}{lccccccc}
\toprule
\textbf{Method} & \textbf{MAE} $\downarrow$ & \textbf{RMSE} $\downarrow$ & \textbf{Corr} $\uparrow$ & $\mathbf{R^2}$ $\uparrow$ & \textbf{ECE} $\downarrow$ & \textbf{CRPS} $\downarrow$ & \textbf{NLL} $\downarrow$ \\
\midrule
\multicolumn{8}{l}{\textit{Individual Scoring Engines}} \\
DynamicBind & 0.8540 & 1.0860 & 0.4630 & 0.0110 & - & - & - \\
FlowDock & 0.8630 & 1.1480 & 0.5290 & -0.1040 & - & - & - \\
GNINA & 0.7310 & 1.0450 & 0.4920 & 0.0840 & - & - & - \\
BIND & 2.1000 & 2.3360 & 0.3100 & -3.5750 & - & - & - \\
\midrule
\multicolumn{8}{l}{\textit{Traditional Aggregation Methods}} \\
Consensus Scoring & 0.7390 & 0.9970 & 0.5870 & 0.1660 & - & - & - \\
Ensemble Scoring & 0.7990 & 1.0020 & \textbf{0.6200} & 0.1580 & - & - & - \\
LWE & 0.7270 & 0.9620 & 0.5700 & 0.2240 & - & - & - \\
Softmax MoE & 0.7560 ± 0.0400 & 0.9740 ± 0.0470 & \underline{0.6150 ± 0.0240} & 0.2040 ± 0.0760 & - & - & - \\
\midrule
\multicolumn{8}{l}{\textit{Learned Aggregation Methods}} \\
Baseline & 0.7650 ± 0.0610 & 0.9900 ± 0.0770 & 0.5860 ± 0.0240 & 0.1750 ± 0.1340 & - & - & - \\
Gaussian & 0.7400 ± 0.0470 & 0.9510 ± 0.0540 & 0.5920 ± 0.0230 & 0.2390 ± 0.0900 & 0.0820 ± 0.0280 & 0.5350 ± 0.0280 & \underline{1.4150 ± 0.0430} \\
MC Dropout & 0.8100 ± 0.0570 & 1.0450 ± 0.0640 & 0.5590 ± 0.0310 & 0.0820 ± 0.1140 & 0.1430 ± 0.0250 & 0.6200 ± 0.0310 & 1.6160 ± 0.0420 \\
Deep Ensemble & \underline{0.7230 ± 0.0350} & \textbf{0.9310 ± 0.0420} & 0.5950 ± 0.0210 & \textbf{0.2720 ± 0.0660} & 0.1040 ± 0.0240 & \underline{0.5300 ± 0.0190} & 1.4240 ± 0.0290 \\
SVGP & 0.7730 ± 0.0580 & 1.0040 ± 0.0590 & 0.5550 ± 0.0450 & 0.1520 ± 0.0990 & 0.1430 ± 0.0410 & 0.5990 ± 0.0320 & 1.5910 ± 0.0620 \\
SWAG & 0.7480 ± 0.0430 & 0.9670 ± 0.0590 & 0.5860 ± 0.0370 & 0.2140 ± 0.0970 & 0.0830 ± 0.0260 & 0.5420 ± 0.0290 & 1.4300 ± 0.0430 \\
Evidential Regression & 0.7570 ± 0.0610 & 0.9680 ± 0.0740 & 0.5840 ± 0.0250 & 0.2110 ± 0.1310 & \underline{0.0700 ± 0.0360} & 0.5440 ± 0.0420 & 1.4160 ± 0.0710 \\
\midrule
\multicolumn{8}{l}{\textit{Ours}} \\
\textbf{RELIABLE-BA} & \textbf{0.7220 ± 0.0220} & \underline{0.9320 ± 0.0180} & 0.5870 ± 0.0050 & \textbf{0.2720 ± 0.0280} & \textbf{0.0550 ± 0.0160} & \textbf{0.5150 ± 0.0100} & \textbf{1.3750 ± 0.0170} \\
\bottomrule
\end{tabular}%
}
\label{tab:bdb2020}
\end{table*}
\subsubsection{RQ1: Point Prediction Performance}

Table~\ref{tab:main} reports point prediction performance on the PDBbind time-split test set. Among learned and traditional aggregation methods, RELIABLE-BA achieves the strongest overall performance, attaining the lowest MAE (0.8247), lowest RMSE (1.0473), highest correlation (0.8457), and highest $R^2$ (0.7017). RELIABLE-BA is also comparable to the strongest individual engine (DynamicBind: MAE 0.8095, RMSE: 1.0840) while providing calibrated uncertainty estimates that individual engines lack.

Table ~\ref{tab:bdb2020} reports results on BDB2020+, an independent test set with zero overlap with training data. RELIABLE-BA achieves the best MAE (0.722) and $R^2$ (0.272), with competitive RMSE (0.932, second to Deep Ensemble's 0.931) again showing its high point prediction performance.

\subsubsection{RQ2: Uncertainty Calibration and Quality}

Table~\ref{tab:main} summarizes the uncertainty evaluation results on PDBbind time-split test set. RELIABLE-BA consistently outperforms all uncertainty-aware baselines in calibration and distributional quality, achieving the lowest ECE, CRPS, and NLL. Compared to evidential regression, RELIABLE-BA reduces ECE by 71.2\%, CRPS by 6.0\%, and NLL by 4.7\%, indicating substantially improved calibration and sharper predictive distributions. 

Table~\ref{tab:bdb2020} with BDB2020+ also shows that RELIABLE-BA consistently outperforms all uncertainty-aware baselines, again achieving the lowest ECE, CRPS, and NLL. Compared to evidential regression, RELIABLE-BA reduces ECE by 21.4\%, CRPS by 5.3\%, and NLL by 2.9\%, confirming that the calibration improvements transfer to an independent validation set.

\subsubsection{RQ3: Selective Prediction and Risk--Coverage Tradeoff}

\begin{figure}[t]
  \centering
  \includegraphics[width=0.8\columnwidth]{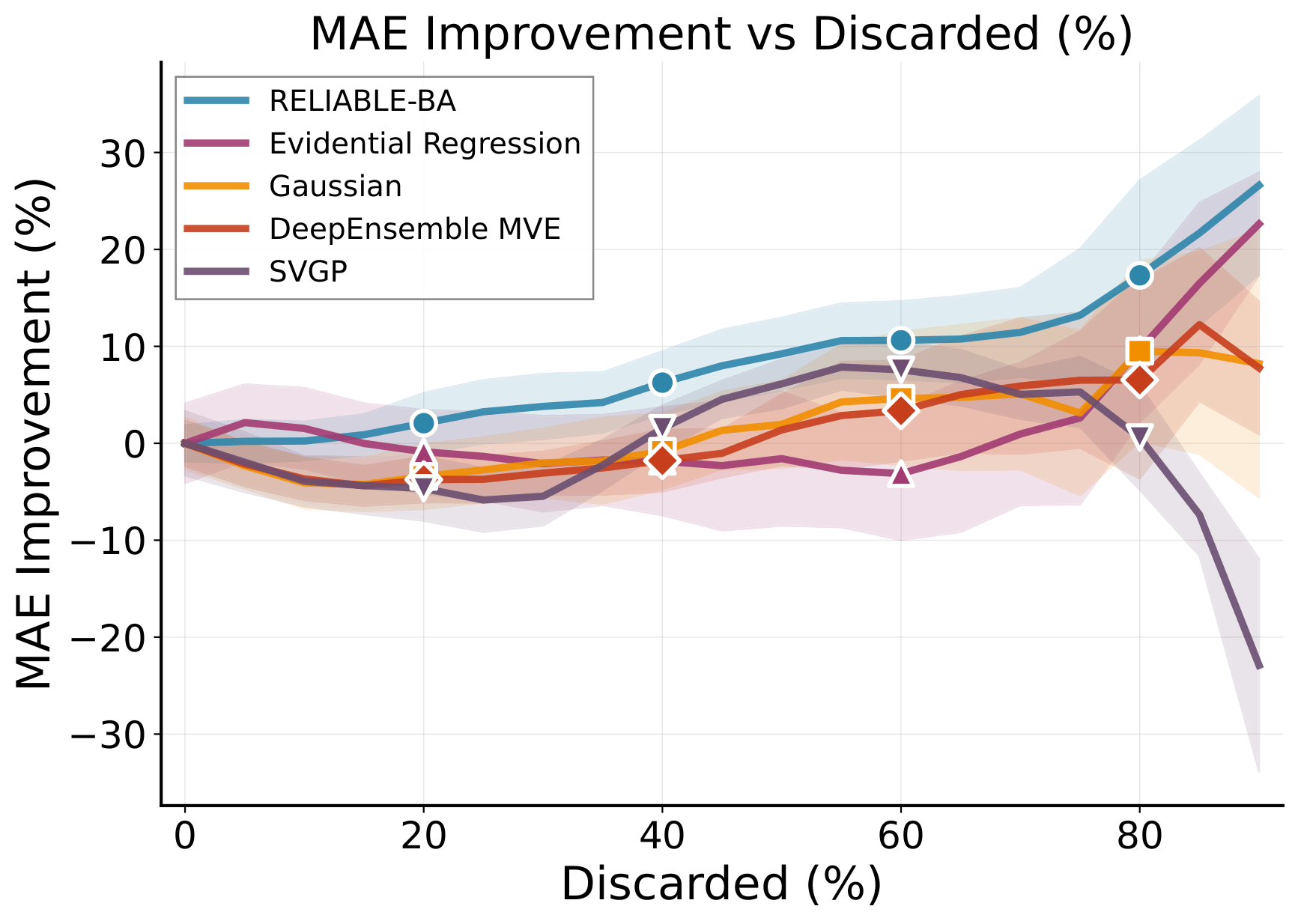}
  \caption{Selective predictive analysis on five models.}
  \Description{Selective predictive analysis on five models.}
  \label{fig:risk_coverage}
\end{figure}

We evaluate whether RELIABLE-BA’s uncertainty estimates meaningfully correlate with prediction difficulty using selective prediction. Figure~\ref{fig:risk_coverage}  reports relative MAE improvement as increasingly uncertain samples are discarded. RELIABLE-BA achieves consistent 
improvement across all discard thresholds, reaching 25.7\%  relative error reduction when retaining only the most confident 10\% of predictions. RELIABLE-BA not only produces well-calibrated uncertainty estimates, but also ranks samples by prediction difficulty more effectively, enabling principled risk-aware filtering in virtual screening pipelines.

\subsubsection{RQ4: Reliability Modeling and Expert Disagreement}

\begin{figure}[t]
  \centering
  \includegraphics[width=\columnwidth]{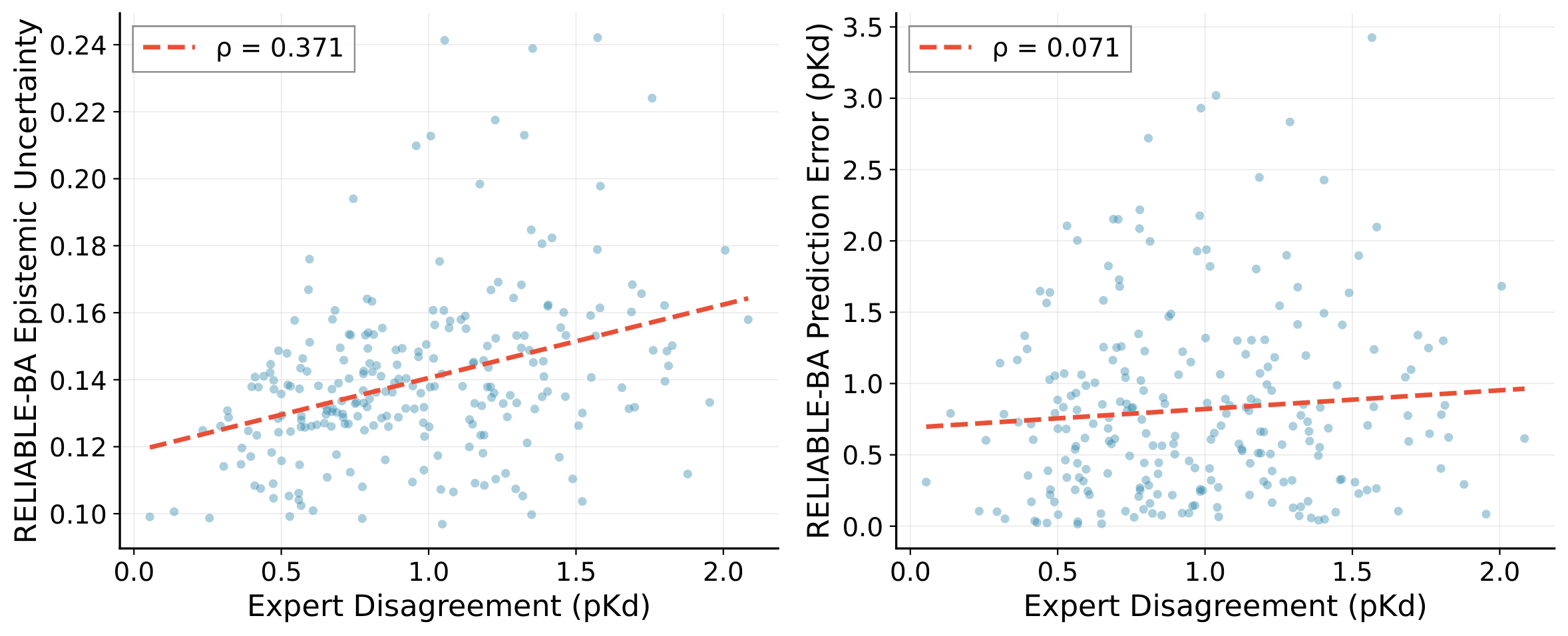}
  \caption{Correlation between expert disagreement and epistemic uncertainty of RELIABLE-BA (left) and expert disagreement and prediction error of RELIABLE-BA (right).}
  \Description{Correlation between expert disagreement and epistemic uncertainty of RELIABLE-BA (left) and expert disagreement and prediction error of RELIABLE-BA (right).}
  \label{fig:disagree}
\end{figure}

We analyze how RELIABLE-BA’s learned reliability interacts with expert disagreement. Figure~\ref{fig:disagree} (left) shows a clear positive correlation between inter-expert disagreement and RELIABLE-BA’s epistemic uncertainty. As expert predictions diverge, RELIABLE-BA appropriately assigns higher epistemic uncertainty, treating disagreement as a signal of epistemic risk rather than noise.

Figure~\ref{fig:disagree} (right) shows that prediction error itself is only weakly correlated with expert disagreement, indicating that RELIABLE-BA does not force consensus at the level of the predictive mean. Instead, disagreement is absorbed through reliability-aware scaling of epistemic uncertainty, preserving stable point predictions while accurately reflecting uncertainty. This design enables RELIABLE-BA to remain relatively robust under heterogeneous expert behavior.

\subsection{Ablation Studies}
We conduct ablation studies to isolate the contribution of key components in RELIABLE-BA, with a particular focus on reliability modeling and expert diversity. Specifically, we evaluate five ablations: (i) removing reliability scaling (No Reliability Scaling), (ii) using only engine scores for reliability estimation without embeddings (Scores Only Reliability), (iii) using uniform reliability weights instead of context-dependent reliability (Uniform Reliability), (iv) aggregating expert predictions using uniform weights without learned reliability-based weighting (Uniform Weight Aggregation), and (v) varying the number of docking engines used in the fusion. For engine diversity experiments, we select engines based on their reliability distribution: 2 engines (DynamicBind, FlowDock) and 3 engines (DynamicBind, FlowDock, BIND). Detailed selection choices are shown in Appendix \ref{appendix:reliability}.

\begin{table}[t]
\caption{Ablation study on RELIABLE-BA components. Results averaged over 20 seeds.}
\centering
\small
\resizebox{\columnwidth}{!}{%
\begin{tabular}{lccccccc}
\toprule
\textbf{Method} & \textbf{MAE} $\downarrow$ & \textbf{RMSE} $\downarrow$ & \textbf{Corr} $\uparrow$ & $\mathbf{R^2}$ $\uparrow$ & \textbf{ECE} $\downarrow$ & \textbf{CRPS} $\downarrow$ & \textbf{NLL} $\downarrow$ \\
\midrule
RELIABLE-BA & \textbf{0.8247} & \textbf{1.0473} & \textbf{0.8457} & \textbf{0.7017} & 0.0149 & \textbf{0.5856} & \textbf{1.4664} \\
No Reliability Scaling & 0.8356 & 1.0613 & 0.8410 & 0.6937 & 0.0147 & 0.5939 & 1.4771 \\
Scores Only Reliability & 0.8385 & 1.0679 & 0.8414 & 0.6899 & \textbf{0.0146} & 0.5972 & 1.4851 \\
Uniform Weight Aggregation & 0.8383 & 1.0555 & 0.8394 & 0.6972 & 0.0209 & 0.5950 & 1.4796 \\
Uniform Reliability & 0.8418 & 1.0701 & 0.8404 & 0.6886 & 0.0279 & 0.5997 & 1.5004 \\
\bottomrule
\end{tabular}%
}
\label{tab:ablation}
\end{table}

As shown in Table ~\ref{tab:ablation}, RELIABLE-BA achieves the best point prediction performance across most ablation variants, with the lowest MAE (0.8247) and RMSE (1.0473) and highest correlation (0.8457) and $R^2$ (0.7017). The primary benefit of reliability modeling is further reflected in uncertainty quality: removing molecular context from reliability estimation or using uniform weights consistently degrades calibration. While ECE is comparable across variants, RELIABLE-BA achieves the best CRPS (0.5856) and NLL (1.4664), demonstrating that context-dependent reliability modeling improves both predictive accuracy and uncertainty calibration.

\begin{table}[t]
\caption{Effect of the number of docking engines in RELIABLE-BA. Results averaged over 20 seeds.}
\centering
\small
\resizebox{\columnwidth}{!}{%
\begin{tabular}{lccccccc}
\toprule
\textbf{Method} & \textbf{MAE} $\downarrow$ & \textbf{RMSE} $\downarrow$ & \textbf{Corr} $\uparrow$ & $\mathbf{R^2}$ $\uparrow$ & \textbf{ECE} $\downarrow$ & \textbf{CRPS} $\downarrow$ & \textbf{NLL} $\downarrow$ \\
\midrule
4 Engines & \textbf{0.8247} & \textbf{1.0473} & \textbf{0.8457} & \textbf{0.7017} & \textbf{0.0149} & \textbf{0.5856} & \textbf{1.4664} \\
3 Engines & 0.8471 & 1.1084 & 0.8448 & 0.6657 & 0.0349 & 0.6124 & 1.5190 \\
2 Engines & 0.8719 & 1.0889 & 0.8340 & 0.6775 & 0.0792 & 0.6245 & 1.5621 \\
\bottomrule
\end{tabular}%
}
\label{tab:ablation_engines}
\end{table}

Table~\ref{tab:ablation_engines} shows that incorporating additional engines improves both accuracy and uncertainty quality. Adding a fourth engine reduces MAE by 5.4\% relative to two engines and by 2.6\% relative to three engines. Calibration benefits are more pronounced: ECE decreases by 81.2\% compared to two engines and by 57.3\% compared to three engines. These results confirm that RELIABLE-BA effectively leverages the complementary strengths of heterogeneous predictors, with each additional engine providing gains in prediction accuracy and calibration.

\subsection{Case Study I: SARS-CoV-2 Mpro Dataset}
We have conducted case study evaluation on the SARS-CoV-2 main protease dataset (Mpro), an independent real-world benchmark disjoint from our training data \cite{Li_2026_LPPDBBind}. Evaluating on the SARS-CoV-2 main protease dataset, a target central to the
COVID-19 pandemic response, demonstrates the real-world impact of this study.

RELIABLE-BA outperforms all individual engines in terms of point prediction with best MAE ($0.532 \pm 0.021$), RMSE ($0.644 \pm 0.024$) and correlation ($0.695 \pm 0.006$) compared to individual engines like GNINA (MAE: 0.631, RMSE: 0.768, Correlation: 0.586) and DynamicBind (MAE: 0.643, RMSE: 	0.780, Correlation: 0.683). RELIABLE-BA also showed strong uncertainty calibration (CRPS = 0.391, NLL = 1.144), demonstrating effective generalization to systematic out-of-domain evaluation.

\subsection{Case Study II: 5HT2A Receptor}

\begin{figure}[t]
\centering
\includegraphics[width=0.8\columnwidth]{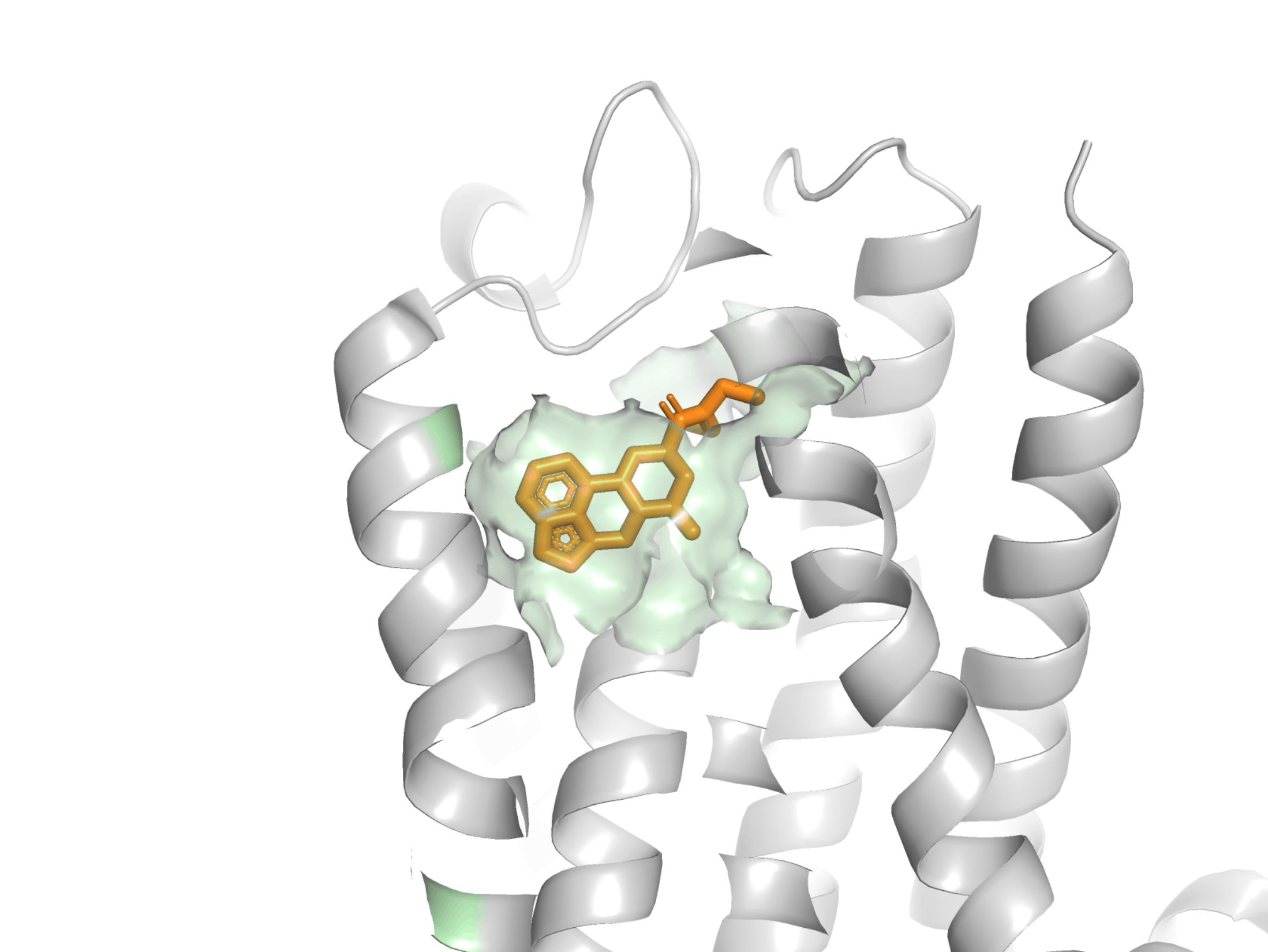}
\caption{Experimentally resolved structure of the human 5HT2A receptor (PDB ID: 7WC6) used in the case study evaluation.}
\Description{Structure of human 5HT2A receptor.}
\label{fig:7wc6}
\end{figure}

The 5HT2A receptor is a class A G protein–coupled receptor (GPCR) that plays a central role in serotonergic signaling and is a key therapeutic target for neurological and psychiatric disorders, including depression and anxiety (Figure ~\ref{fig:7wc6}) \cite{schmidt1995role}. From a computational perspective, 5HT2A provides a realistic evaluation setting for affinity prediction under distribution shift, as the resolved receptor structure is not represented in the training split and the ligand set exhibits substantial functional heterogeneity. This setting is well-suited for examining the behavior of uncertainty-aware fusion methods, and thus can prove that this study can expand to virtual-screening settings.

We select PDB structure 7WC6 as the protein target and evaluate all methods on distinct ligands, averaged over 20 random seeds. RELIABLE-BA achieves an MAE of $1.002 \pm 0.04$ and an RMSE of $1.264 \pm 0.043$, outperforming the strongest individual engine, DynamicBind (MAE: 1.060, RMSE: 1.360), by 5.5\% and 7.1\%, respectively. Notably, uncertainty calibration remains strong under distribution shift. RELIABLE-BA achieves ECE of 0.0233, comparable to in-distribution performance (0.0149), with CRPS of 0.7143 and NLL of 1.6622, showing competitive uncertainty quantification. 

\section{Conclusion}
In this paper, we proposed RELIABLE-BA, a reliability-aware evidential framework that fuses heterogeneous docking engines through context-dependent uncertainty modulation. By modeling each engine as an evidential expert and scaling epistemic uncertainty according to learned reliability, RELIABLE-BA produces calibrated predictive distributions that enable effective selective prediction, reducing MAE by 25.7\% when retaining the most confident 10\% of predictions. Experiments on PDBbind and BDB2020+ demonstrate strong point prediction accuracy and well-calibrated uncertainty estimates, with additional validation on the SARS-CoV-2 Mpro dataset and the 5HT2A GPCR confirming applicability to practical drug targets. Moreover, RELIABLE-BA's modular design allows it to integrate heterogeneous docking and scoring engines, making it broadly applicable to large-scale virtual screening pipelines. Future work will explore extending RELIABLE-BA to larger docking ensembles and incorporating pose-level uncertainty to further enhance applicability in drug discovery workflows.

\section{Limitations and Ethical Considerations}
This work does not involve human participants or personally identifiable information. All experiments are conducted on publicly available benchmark datasets for protein–ligand binding affinity prediction (PDBbind 2020 \cite{wang2005pdbbind}, ChEMBL \cite{gaulton2012chembl}, BDB2020+ \cite{Li_2026_LPPDBBind}), derived from previously published experimental measurements and used in accordance with their original licenses.

As a limitation, RELIABLE-BA relies on the availability of multiple binding affinity prediction engines and may be less effective when only a few engines are available or when their predictions are highly correlated. The learned reliability scores reflect context-dependent predictive trustworthiness rather than physical or mechanistic correctness and should not be interpreted as causal guarantees. Our evaluation relies on established benchmark datasets, which may contain inherent biases or data leakage despite mitigation efforts. Finally, RELIABLE-BA introduces computational overhead from multi-engine inference, though this overhead is minimal compared to the cost of individual docking engines (Appendix~\ref{appendix:compute}).

\begin{acks}
This work was supported in part by the U.S. National Science Foundation under Grant No.~2125142 to Y.L. and by the National Institutes of Health under Grant No.~R35GM153437 to V.K. GPU resources were provided through an NVIDIA Academic Grant.
\end{acks}

\section*{GenAI Disclosure}
Generative AI tools were used in a limited capacity. Claude was utilized for minor code construction, while ChatGPT was used for grammar and writing refinement.

\bibliographystyle{ACM-Reference-Format}
\balance
\bibliography{reference}

@article{panahandeh2025,
  title={A comprehensive review of neural network-based approaches for drug--target interaction prediction},
  author={Panahandeh, Fatemeh and Mansouri, Najme},
  journal={Molecular Diversity},
  pages={1--48},
  year={2025},
  publisher={Springer}
}

@inproceedings{
cao2026pinfdit,
title={{PINFD}iT: Energy-Based Physics-Informed {Diffusion} {Transformers} for General-purpose Time Series Tasks},
author={Defu Cao and Wen Ye and Yizhou Zhang and Sam Griesemer and Yan Liu},
booktitle={The Fourteenth International Conference on Learning Representations},
year={2026},
url={https://openreview.net/forum?id=EphTlUJ4XN}
}

@inproceedings{cao2023estimating,
  title={Estimating treatment effects from irregular time series observations with hidden confounders},
  author={Cao, Defu and Enouen, James and Wang, Yujing and Song, Xiangchen and Meng, Chuizheng and Niu, Hao and Liu, Yan},
  booktitle={Proceedings of the AAAI Conference on Artificial Intelligence},
  volume={37},
  pages={6897--6905},
  year={2023}
}

@article{lam2025navigating,
  title={Navigating structure-based drug discovery with emerging innovations in physics-and knowledge-based approaches},
  author={Lam, Jordy Homing and Katritch, Vsevolod},
  journal={npj Drug Discovery},
  volume={2},
  number={1},
  pages={29},
  year={2025},
  publisher={Nature Publishing Group UK London}
}

@article{passaro2025boltz2,
  title={Boltz-2: Towards Accurate and Efficient Binding Affinity Prediction},
  author={Passaro, Saro and Corso, Gabriele and Wohlwend, Jeremy and Reveiz, Mateo and Thaler, Stephan and Somnath, Vignesh Ram and Getz, Noah and Portnoi, Tally and Roy, Julien and Stark, Hannes and others},
  journal={bioRxiv},
  year={2025},
  doi={10.1101/2025.06.14.659707}
}

@article{Wang_2016_comprehensive,
  author    = {Wang, Zhe and Sun, Huiyong and Yao, Xiaojun and Li, Dan and Xu, Lei and Li, Youyong and Tian, Sheng and Hou, Tingjun},
  title     = {Comprehensive evaluation of ten docking programs on a diverse set of protein--ligand complexes: the prediction accuracy of sampling power and scoring power},
  journal   = {Physical Chemistry Chemical Physics},
  volume    = {18},
  number    = {18},
  pages     = {12964--12975},
  year      = {2016},
  doi       = {10.1039/C6CP01555G}
}

@article{buttenschoen2024posebusters,
  title={PoseBusters: AI-based docking methods fail to generate physically valid poses or generalise to novel sequences},
  author={Buttenschoen, Martin and Morris, Garrett M and Deane, Charlotte M},
  journal={Chemical Science},
  volume={15},
  number={9},
  pages={3130--3139},
  year={2024},
  publisher={Royal Society of Chemistry}
}

@article{Scardino_2022_consensus_survey,
  author={Blanes-Mira, Clara and Fern{\'a}ndez-Aguado, Pilar and de Andr{\'e}s-L{\'o}pez, Jorge and Fern{\'a}ndez-Carvajal, Asia and Ferrer-Montiel, Antonio and Fern{\'a}ndez-Ballester, Gregorio},
  title     = {Comprehensive survey of consensus docking for high-throughput virtual screening},
  journal   = {Molecules},
  volume    = {28},
  number    = {1},
  pages     = {175},
  year      = {2022},
  doi       = {10.3390/molecules28010175}
}

@article{Ericksen_2017_MLconsensus,
  author    = {Ericksen, Spencer S and Wu, Haozhen and Zhang, Huikun and Michael, Lauren A and Newton, Michael A and Hoffmann, F Michael and Wildman, Scott A},
  title     = {Machine Learning Consensus Scoring Improves Performance Across Targets in Structure-Based Virtual Screening},
  journal   = {Journal of Chemical Information and Modeling},
  volume    = {57},
  number    = {7},
  pages     = {1579--1590},
  year      = {2017},
  doi       = {10.1021/acs.jcim.7b00153}
}

@article{Yang_2005_consensus,
  author    = {Yang, Jinn-Moon and Chen, Yen-Fu and Shen, Tsai-Wei and Kristal, Bruce S. and Hsu, D. Frank},
  title     = {Consensus Scoring Criteria for Improving Enrichment in Virtual Screening},
  journal   = {Journal of Chemical Information and Modeling},
  volume    = {45},
  number    = {4},
  pages     = {1134--1146},
  year      = {2005},
  doi       = {10.1021/ci050034w}
}

@article{NhatPhuong_2023_consensus,
  author    = {Nhat Phuong, Do and Flower, Darren R. and Chattopadhyay, Subhagata and Chattopadhyay, Amit K.},
  title     = {Towards effective consensus scoring in Structure-Based virtual screening},
  journal   = {Interdisciplinary Sciences: Computational Life Sciences},
  volume    = {15},
  number    = {1},
  pages     = {131--145},
  year      = {2023},
  doi       = {10.1007/s12539-022-00546-8}
}

@article{Uddin_2024_EBA,
  title={Ensembling methods for protein-ligand binding affinity prediction},
  author={Mohamed Abdul Cader, Jiffriya and Newton, MA Hakim and Rahman, Julia and Mohamed Abdul Cader, Akmal Jahan and Sattar, Abdul},
  journal={Scientific Reports},
  volume={14},
  number={1},
  pages={24447},
  year={2024},
  publisher={Nature Publishing Group UK London}
}

@article{Rayka_2024_ENSscore,
  title     = {An ensemble-based approach to estimate confidence of predicted protein--ligand binding affinity values},
  author={Rayka, Milad and Mirzaei, Morteza and Mohammad Latifi, Ali},
  journal={Molecular Informatics},
  volume={43},
  number={4},
  pages={e202300292},
  year={2024},
  publisher={Wiley Online Library},
  doi       = {10.1002/minf.202300292}
}

@article{Ballester_2010_RFscore,
  author    = {Ballester, Pedro J. and Mitchell, John B. O.},
  title     = {A machine learning approach to predicting protein--ligand binding affinity with applications to molecular docking},
  journal   = {Bioinformatics},
  volume    = {26},
  number    = {9},
  pages     = {1169--1175},
  year      = {2010},
  doi       = {10.1093/bioinformatics/btq112}
}

@inproceedings{Lakshminarayanan_2017_ensembles,
  author    = {Lakshminarayanan, Balaji and Pritzel, Alexander and Blundell, Charles},
  title     = {Simple and scalable predictive uncertainty estimation using deep ensembles},
  journal={Advances in neural information processing systems},
  volume={30},
  year={2017}
}

@inproceedings{Kendall_2017_uncertainties,
  author    = {Kendall, Alex and Gal, Yarin},
  title={What uncertainties do we need in bayesian deep learning for computer vision?},
  booktitle = {Advances in Neural Information Processing Systems},
  volume    = {30},
  year      = {2017}
}

@article{Rayka_2025_UQ_binding,
  title={Uncertainty quantification enables reliable deep learning for protein--ligand binding affinity prediction},
  author={Rayka, Milad and Naghavi, S Shahab},
  journal={Scientific Reports},
  volume={15},
  number={1},
  pages={43156},
  year={2025},
  publisher={Nature Publishing Group UK London},
  doi       = {10.1038/s41598-025-27167-7}
}

@article{Soleimany_2021_evidential,
  title={Evidential Deep Learning for Guided Molecular Property Prediction and Discovery},
  author={Soleimany, Ava P and Amini, Alexander and Goldman, Samuel and Rus, Daniela and Bhatia, Sangeeta N and Coley, Connor W},
  journal={ACS central science},
  volume={7},
  number={8},
  pages={1356--1367},
  year={2021},
  publisher={ACS Publications},
  doi       = {10.1021/acscentsci.1c00546}
}

@article{EviDTI_2025_NatComm,
  title={Evidential deep learning-based drug-target interaction prediction},
  author={Zhao, Yanpeng and Xing, Yuting and Zhang, Yixin and Wang, Yifei and Wan, Mengxuan and Yi, Duoyun and Wu, Chengkun and Li, Shangze and Xu, Huiyan and Zhang, Hongyang and others},
  journal={Nature communications},
  volume={16},
  number={1},
  pages={6915},
  year={2025},
  publisher={Nature Publishing Group UK London}
}

@article{Xu_2025_UAMRL,
  title={UAMRL: multi-granularity uncertainty-aware multimodal representation learning for drug-target affinity prediction},
  author={Xu, Wenzhe and Liu, Xiaorong and Wang, Jie and Zhang, Fan and Hu, Dongfeng and Zong, Liansong},
  journal={Bioinformatics},
  volume={41},
  number={10},
  pages={btaf512},
  year={2025},
  publisher={Oxford University Press},
  doi       = {10.1093/bioinformatics/btaf512}
}

@article{amini2020deep,
  title={Deep evidential regression},
  author={Amini, Alexander and Schwarting, Wilko and Soleimany, Ava and Rus, Daniela},
  journal={Advances in neural information processing systems},
  volume={33},
  pages={14927--14937},
  year={2020}
}

@article{ma2021trustworthy,
  title={Trustworthy multimodal regression with mixture of normal-inverse gamma distributions},
  author={Ma, Huan and Han, Zongbo and Zhang, Changqing and Fu, Huazhu and Zhou, Joey Tianyi and Hu, Qinghua},
  journal={Advances in Neural Information Processing Systems},
  volume={34},
  pages={6881--6893},
  year={2021}
}

@article{wang2015accurate,
  title={Accurate and reliable prediction of relative ligand binding potency in prospective drug discovery by way of a modern free-energy calculation protocol and force field},
  author={Wang, Lingle and Wu, Yujie and Deng, Yuqing and Kim, Byungchan and Pierce, Levi and Krilov, Goran and Lupyan, Dmitry and Robinson, Shaughnessy and Dahlgren, Markus K and Greenwood, Jeremy and others},
  journal={Journal of the American Chemical Society},
  volume={137},
  number={7},
  pages={2695--2703},
  year={2015},
  publisher={ACS Publications}
}

@article{trott2010autodock,
  title={AutoDock Vina: improving the speed and accuracy of docking with a new scoring function, efficient optimization, and multithreading},
  author={Trott, Oleg and Olson, Arthur J},
  journal={Journal of Computational Chemistry},
  volume={31},
  number={2},
  pages={455--461},
  year={2010},
  publisher={Wiley}
}

@article{friesner2004glide,
  title={Glide: a new approach for rapid, accurate docking and scoring. 1. Method and assessment of docking accuracy},
  author={Friesner, Richard A and Banks, Jay L and Murphy, Robert B and Halgren, Thomas A and Klicic, Jasna J and Mainz, Daniel T and Repasky, Matthew P and Knoll, Eric H and Shelley, Mee and Perry, Jason K and others},
  journal={Journal of Medicinal Chemistry},
  volume={47},
  number={7},
  pages={1739--1749},
  year={2004},
  publisher={ACS Publications}
}

@article{mcnutt2021gnina,
  title={GNINA 1.0: molecular docking with deep learning},
  author={McNutt, Andrew T and Francoeur, Paul and Aggarwal, Rishal and Masuda, Tomohide and Meli, Rocco and Ragoza, Matthew and Sunseri, Jocelyn and Koes, David Ryan},
  journal={Journal of cheminformatics},
  volume={13},
  number={1},
  pages={43},
  year={2021},
  publisher={Springer}
}

@article{jimenez2018kdeep,
  title={K deep: protein--ligand absolute binding affinity prediction via 3d-convolutional neural networks},
  author={Jim{\'e}nez, Jos{\'e} and Skalic, Miha and Martinez-Rosell, Gerard and De Fabritiis, Gianni},
  journal={Journal of chemical information and modeling},
  volume={58},
  number={2},
  pages={287--296},
  year={2018},
  publisher={ACS Publications}
}

@article{stepniewska2018development,
  title={Development and evaluation of a deep learning model for protein--ligand binding affinity prediction},
  author={Stepniewska-Dziubinska, Marta M and Zielenkiewicz, Piotr and Siedlecki, Pawel},
  journal={Bioinformatics},
  volume={34},
  number={21},
  pages={3666--3674},
  year={2018},
  publisher={Oxford University Press}
}

@article{lu2024dynamicbind,
  title={DynamicBind: predicting ligand-specific protein-ligand complex structure with a deep equivariant generative model},
  author={Lu, Wei and Wu, Qifeng and Zhang, Jixian and Rao, Jiahua and Li, Chengtao and Zheng, Shuangjia},
  journal={Nature Communications},
  volume={15},
  number={1},
  pages={1071},
  year={2024},
  publisher={Nature Publishing Group}
}

@article{flowdock,
    author = {Morehead, Alex and Cheng, Jianlin},
    title = {{FlowDock}: Geometric flow matching for generative protein--ligand docking and affinity prediction},
    journal = {Bioinformatics},
    volume = {41},
    number = {Supplement\_1},
    pages = {{i198--i206}},
    year = {2025},
    publisher={Oxford University Press},
    doi = {10.1093/bioinformatics/btaf187}
}

@article{bind,
  title={Protein language models are performant in structure-free virtual screening},
  author={Lam, Hilbert Yuen In and Guan, Jia Sheng and Ong, Xing Er and Pincket, Robbe and Mu, Yuguang},
  journal={Briefings in Bioinformatics},
  volume={25},
  number={6},
  pages={bbae480},
  year={2024},
  publisher={Oxford University Press}
}

@inproceedings{gal2016dropout,
  author    = {Gal, Yarin and Ghahramani, Zoubin},
  title={Dropout as a bayesian approximation: Representing model uncertainty in deep learning},
  booktitle={international conference on machine learning},
  pages={1050--1059},
  year={2016},
  organization={PMLR}
}

@inproceedings{maddox2019simple,
  title={A simple baseline for bayesian uncertainty in deep learning},
  author={Maddox, Wesley J and Izmailov, Pavel and Garipov, Timur and Vetrov, Dmitry P and Wilson, Andrew Gordon},
  journal={Advances in neural information processing systems},
  volume={32},
  year={2019}
}

@inproceedings{hensman2015scalable,
  title={Scalable variational Gaussian process classification},
  author={Hensman, James and Matthews, Alexander and Ghahramani, Zoubin},
  booktitle={Artificial intelligence and statistics},
  pages={351--360},
  year={2015},
  organization={PMLR}
}

@inproceedings{stark2022equibind,
  title={Equibind: Geometric deep learning for drug binding structure prediction},
  author={St{\"a}rk, Hannes and Ganea, Octavian and Pattanaik, Lagnajit and Barzilay, Regina and Jaakkola, Tommi},
  booktitle={International conference on machine learning},
  pages={20503--20521},
  year={2022},
  organization={PMLR}
}

@article{Li_2026_LPPDBBind,
  title={Leak Proof PDBBind: a reorganized data set of protein--ligand complexes for more generalizable binding affinity prediction},
  author={Li, Jie and Guan, Xingyi and Zhang, Oufan and Sun, Kunyang and Wang, Yingze and Bagni, Dorian and Head-Gordon, Teresa},
  journal={The Journal of Physical Chemistry B},
  volume={130},
  number={2},
  pages={730--740},
  year={2026},
  publisher={ACS Publications}
}

@article{wang2005pdbbind,
  title={The PDBbind database: methodologies and updates},
  author={Wang, Renxiao and Fang, Xueliang and Lu, Yipin and Yang, Chao-Yie and Wang, Shaomeng},
  journal={Journal of medicinal chemistry},
  volume={48},
  number={12},
  pages={4111--4119},
  year={2005},
  publisher={ACS Publications}
}

@article{Graff_2021_active_learning,
  title={Accelerating high-throughput virtual screening through molecular pool-based active learning},
  author={Graff, David E and Shakhnovich, Eugene I and Coley, Connor W},
  journal={Chemical science},
  volume={12},
  number={22},
  pages={7866--7881},
  year={2021},
  publisher={Royal Society of Chemistry}
}

@article{Hauser_2017_GPCR,
  title={Trends in GPCR drug discovery: new agents, targets and indications},
  author={Hauser, Alexander S and Attwood, Misty M and Rask-Andersen, Mathias and Schi{\"o}th, Helgi B and Gloriam, David E},
  journal={Nature reviews Drug discovery},
  volume={16},
  number={12},
  pages={829--842},
  year={2017},
  publisher={Nature Publishing Group UK London}
}

@article{Kalliokoski2013,
  title={Comparability of Mixed IC50 Data--A Statistical Analysis},
  author={Kalliokoski, Tuomo and Kramer, Christian and Vulpetti, Anna and Gedeck, Peter},
  journal={PLoS ONE},
  volume={8},
  number={4},
  pages={e61007},
  year={2013},
  publisher={Public Library of Science San Francisco, USA},
  doi={10.1371/journal.pone.0061007}
}

@article{lin2022language,
  title={Language models of protein sequences at the scale of evolution enable accurate structure prediction},
  author={Lin, Zeming and Akin, Halil and Rao, Roshan and Hie, Brian and Zhu, Zhongkai and Lu, Wenting and dos Santos Costa, Allan and Fazel-Zarandi, Maryam and Sercu, Tom and Candido, Sal and others},
  journal={BioRxiv},
  volume={2022},
  pages={500902},
  year={2022}
}

@article{chithrananda2020chemberta,
  title={ChemBERTa: large-scale self-supervised pretraining for molecular property prediction},
  author={Chithrananda, Seyone and Grand, Gabriel and Ramsundar, Bharath},
  journal={arXiv preprint arXiv:2010.09885},
  year={2020}
}

@article{schmidt1995role,
  title={The role of 5-HT2A receptors in antipsychotic activity},
  author={Schmidt, Christopher J and Sorensen, Stephen M and Kenne, John H and Carr, Albert A and Palfreyman, Michael G},
  journal={Life sciences},
  volume={56},
  number={25},
  pages={2209--2222},
  year={1995},
  publisher={Elsevier}
}

@article{landrum2024combining,
  title={Combining IC50 or K i values from different sources is a source of significant noise},
  author={Landrum, Gregory A and Riniker, Sereina},
  journal={Journal of chemical information and modeling},
  volume={64},
  number={5},
  pages={1560--1567},
  year={2024},
  publisher={ACS Publications}
}

@article{gaulton2012chembl,
  title={ChEMBL: a large-scale bioactivity database for drug discovery},
  author={Gaulton, Anna and Bellis, Louisa J and Bento, A Patricia and Chambers, Jon and Davies, Mark and Hersey, Anne and Light, Yvonne and McGlinchey, Shaun and Michalovich, David and Al-Lazikani, Bissan and others},
  journal={Nucleic acids research},
  volume={40},
  number={D1},
  pages={D1100--D1107},
  year={2012},
  publisher={Oxford University Press}
}

@article{chung2021uncertainty,
  title={Uncertainty Toolbox: an Open-Source Library for Assessing, Visualizing, and Improving Uncertainty Quantification},
  author={Chung, Youngseog and Char, Ian and Guo, Han and Schneider, Jeff and Neiswanger, Willie},
  journal={arXiv preprint arXiv:2109.10254},
  year={2021}
}

@article{breiman1996stacked,
  title={Stacked regressions},
  author={Breiman, Leo},
  journal={Machine Learning},
  volume={24},
  number={1},
  pages={49--64},
  year={1996},
  publisher={Springer},
  doi={10.1007/BF00117832}
}

\appendix
\section{Proof of Validity of Reliability-Scaled MoNIG Aggregation}
\label{appendix:proof}

We prove that RELIABLE-BA's reliability scaling preserves NIG validity and that MoNIG fusion remains valid under this scaling.

\subsection{Preliminaries}

A Normal--Inverse-Gamma distribution $\mathrm{NIG}(\gamma, \nu, \alpha, \beta)$ is \emph{valid} if:
\begin{itemize}
    \item[(V1)] $\nu > 0$
    \item[(V2)] $\alpha > 1$
    \item[(V3)] $\beta > 0$
\end{itemize}

Under these conditions, the moments are well-defined:
\begin{align}
    \mathbb{E}[\mu] &= \gamma, \\
    \mathbb{E}[\sigma^2] &= \frac{\beta}{\alpha - 1} \quad \text{(aleatoric uncertainty)}, \\
    \mathrm{Var}(\mu) &= \frac{\beta}{\nu(\alpha - 1)} \quad \text{(epistemic uncertainty)}.
\end{align}

\subsection{Reliability Scaling Preserves Validity}

Given valid NIG parameters $(\gamma_j, \nu_j, \alpha_j, \beta_j)$ and reliability score $r_j \in (0, 1]$, RELIABLE-BA applies the following transformation:
\begin{align}
    \tilde{\gamma}_j &= \gamma_j, \\
    \tilde{\nu}_j &= r_j \nu_j, \\
    \tilde{\alpha}_j &= 1 + r_j(\alpha_j - 1), \\
    \tilde{\beta}_j &= r_j \beta_j.
\end{align}

\begin{proposition}
If $(\gamma_j, \nu_j, \alpha_j, \beta_j)$ is a valid NIG and $r_j \in (0, 1]$, then $(\tilde{\gamma}_j, \tilde{\nu}_j, \tilde{\alpha}_j, \tilde{\beta}_j)$ is also valid.
\end{proposition}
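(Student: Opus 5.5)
The plan is to verify the three validity conditions (V1)--(V3) one at a time, directly from the transformation and the hypotheses $\nu_j>0$, $\alpha_j>1$, $\beta_j>0$, $r_j\in(0,1]$. The parameter $\tilde{\gamma}_j=\gamma_j$ is unconstrained, since the definition of validity imposes no condition on the location parameter, so it needs no argument.

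For (V1) and (V3), I will use that a product of two strictly positive reals is strictly positive. This gives $\tilde{\nu}_j=r_j\nu_j>0$ and $\tilde{\beta}_j=r_j\beta_j>0$, because $r_j>0$. For (V2), I will write $\tilde{\alpha}_j-1=r_j(\alpha_j-1)$. This is again a product of two strictly positive numbers, since $r_j>0$ and $\alpha_j-1>0$, so $\tilde{\alpha}_j>1$.

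I will also note that this form of $\tilde{\alpha}_j$ agrees with the main-text form: $r_j\alpha_j+(1-r_j)=1+r_j(\alpha_j-1)$. The proof therefore covers the transformation actually used in Section~\ref{sec:reliability_scaling}. Finally, I will point out that $r_j>0$ is the only property of $r_j$ the argument uses; the upper bound $r_j\le 1$ plays no role, consistent with the paper's remark that $r_i(x)$ is constrained away from zero.

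As a consequence worth recording, the moments listed in the preliminaries are well-defined for the scaled NIG. In particular, $\mathbb{E}[\sigma^2]=\tilde{\beta}_j/(\tilde{\alpha}_j-1)=r_j\beta_j/(r_j(\alpha_j-1))=\beta_j/(\alpha_j-1)$. This confirms the claim that aleatoric uncertainty is unchanged, and it will be useful for the later statement about MoNIG fusion.

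There is no real obstacle here. The only point needing care is (V2): one must use the rewritten form $1+r_j(\alpha_j-1)$ rather than argue from $r_j\alpha_j$ alone, since $r_j\alpha_j$ can be less than $1$.
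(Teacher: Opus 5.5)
Your proof is correct and matches the paper's argument: each of (V1)--(V3) is checked directly as a product of strictly positive quantities, with (V2) handled through $\tilde{\alpha}_j - 1 = r_j(\alpha_j - 1)$. Your side remarks are also accurate: the main-text form of $\tilde{\alpha}_j$ agrees with the appendix form, and the bound $r_j \le 1$ is never used.
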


\begin{proof}
We verify each validity condition:
\begin{itemize}
    \item[(V1)] $\tilde{\nu}_j = r_j \nu_j > 0$ since $r_j > 0$ and $\nu_j > 0$.
    \item[(V2)] $\tilde{\alpha}_j = 1 + r_j(\alpha_j - 1) > 1$ since $r_j > 0$ and $\alpha_j - 1 > 0$.
    \item[(V3)] $\tilde{\beta}_j = r_j \beta_j > 0$ since $r_j > 0$ and $\beta_j > 0$.
\end{itemize}
\end{proof}

\subsection{Effect on Uncertainty Components}

\begin{proposition}[Aleatoric Invariance]
Reliability scaling preserves aleatoric uncertainty: $\tilde{\sigma}^2_{\mathrm{a},j} = \sigma^2_{\mathrm{a},j}$.
\end{proposition}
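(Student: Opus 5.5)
The plan is to compute the aleatoric term of the scaled expert directly from the moment formula in the Preliminaries and compare it with the unscaled one. I take $\sigma^2_{\mathrm{a},j} = \beta_j/(\alpha_j - 1)$ and $\tilde{\sigma}^2_{\mathrm{a},j} = \tilde{\beta}_j/(\tilde{\alpha}_j - 1)$. By the preceding proposition the scaled tuple is a valid NIG, so $\tilde{\alpha}_j > 1$ and this expression is well defined; I invoke that result first to justify the division.

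The key step is to simplify the denominator using the appendix form of the transformation, $\tilde{\alpha}_j = 1 + r_j(\alpha_j - 1)$. This gives $\tilde{\alpha}_j - 1 = r_j(\alpha_j - 1)$. Combined with $\tilde{\beta}_j = r_j\beta_j$, it yields
\[
\tilde{\sigma}^2_{\mathrm{a},j} = \frac{r_j \beta_j}{r_j(\alpha_j - 1)} = \frac{\beta_j}{\alpha_j - 1} = \sigma^2_{\mathrm{a},j}.
\]
The cancellation of $r_j$ is legitimate because $r_j \in (0,1]$ is strictly positive.

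The one point needing care is reconciling the two stated forms of $\tilde{\alpha}_j$. In the main text, Section~\ref{sec:reliability_scaling} writes $\tilde{\alpha}_j = r_j\alpha_j + (1 - r_j)$, and expanding gives $1 + r_j(\alpha_j - 1)$, so the two forms agree. I will note this explicitly so the invariance applies to the model as actually implemented.

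I would also remark that the same computation shows the epistemic term scales as $\tilde{\beta}_j/(\tilde{\nu}_j(\tilde{\alpha}_j - 1)) = \mathrm{Var}(\mu_j)/r_j$. This grows as reliability drops, which is consistent with the design intent, though it is not required for this proposition.
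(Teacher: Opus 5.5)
Your proof is correct and is essentially the paper's own argument: substitute $\tilde{\beta}_j = r_j\beta_j$ and $\tilde{\alpha}_j - 1 = r_j(\alpha_j - 1)$ into $\tilde{\beta}_j/(\tilde{\alpha}_j - 1)$ and cancel $r_j$. Your extra checks are small rigor additions the paper leaves implicit: $r_j > 0$ justifies the cancellation, Proposition 1 gives $\tilde{\alpha}_j > 1$, and the main-text form $r_j\alpha_j + (1 - r_j)$ agrees with the appendix form $1 + r_j(\alpha_j - 1)$.
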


\begin{proof}
\begin{align}
    \tilde{\sigma}^2_{\mathrm{a},j} 
    = \frac{\tilde{\beta}_j}{\tilde{\alpha}_j - 1} 
    = \frac{r_j \beta_j}{r_j(\alpha_j - 1)} 
    = \frac{\beta_j}{\alpha_j - 1} 
    = \sigma^2_{\mathrm{a},j}.
\end{align}
\end{proof}

\begin{proposition}[Epistemic Scaling]
Reliability scaling increases epistemic uncertainty for unreliable experts: $\tilde{\sigma}^2_{\mathrm{e},j} = \sigma^2_{\mathrm{e},j} / r_j$.
\end{proposition}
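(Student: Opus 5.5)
The plan is a direct computation from the epistemic-variance formula $\mathrm{Var}(\mu)=\beta/(\nu(\alpha-1))$, applied to the scaled parameters from the reliability-scaling transformation in this appendix. Write $\sigma^2_{\mathrm{e},j}=\beta_j/(\nu_j(\alpha_j-1))$ for the unscaled expert. By the validity proposition above, the scaled tuple is a valid NIG. Hence $\tilde{\sigma}^2_{\mathrm{e},j}=\tilde{\beta}_j/(\tilde{\nu}_j(\tilde{\alpha}_j-1))$ is well defined, and no division by zero occurs.

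Next, substitute $\tilde{\beta}_j=r_j\beta_j$, $\tilde{\nu}_j=r_j\nu_j$ and $\tilde{\alpha}_j-1=r_j(\alpha_j-1)$. The last identity follows from writing $\tilde{\alpha}_j=1+r_j(\alpha_j-1)$, which agrees with the main-text form $r_j\alpha_j+(1-r_j)$. The denominator then carries a factor $r_j^2$ and the numerator a factor $r_j$, giving
\begin{equation*}
\tilde{\sigma}^2_{\mathrm{e},j}=\frac{r_j\beta_j}{r_j\nu_j\cdot r_j(\alpha_j-1)}=\frac{1}{r_j}\cdot\frac{\beta_j}{\nu_j(\alpha_j-1)}=\frac{\sigma^2_{\mathrm{e},j}}{r_j}.
\end{equation*}
A cleaner way to say the same thing is through the aleatoric-invariance proposition: $\mathrm{Var}(\mu)=\mathbb{E}[\sigma^2]/\nu$. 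The numerator $\mathbb{E}[\sigma^2]$ is unchanged by scaling, while the denominator $\nu$ is multiplied by $r_j$, so the $1/r_j$ factor is immediate.

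Finally, to justify the phrase ``increases epistemic uncertainty for unreliable experts'', note that $r_j\in(0,1]$ gives $1/r_j\ge 1$. Therefore $\tilde{\sigma}^2_{\mathrm{e},j}\ge\sigma^2_{\mathrm{e},j}$, with equality iff $r_j=1$. The inflation is also strictly monotone in $r_j$: lower reliability yields larger epistemic variance.

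The main obstacle is not the algebra but consistency of conventions. The $\tilde{\alpha}$ formula used here must match the main text so that $\tilde{\alpha}_j-1$ genuinely factors as $r_j(\alpha_j-1)$. In addition, $r_j$ must be strictly positive, which the paper guarantees by bounding it away from zero.
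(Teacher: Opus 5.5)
Your proposal is correct and follows the paper's proof essentially line for line: substitute $\tilde{\beta}_j=r_j\beta_j$, $\tilde{\nu}_j=r_j\nu_j$ and $\tilde{\alpha}_j-1=r_j(\alpha_j-1)$ into $\beta/(\nu(\alpha-1))$, then cancel to obtain the factor $1/r_j$. Your additional remarks are valid but not required: the reformulation $\mathrm{Var}(\mu)=\mathbb{E}[\sigma^2]/\nu$ via aleatoric invariance, and the check that the appendix's $\tilde{\alpha}_j$ agrees with the main text's $r_j\alpha_j+(1-r_j)$.
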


\begin{proof}
\begin{align}
    \tilde{\sigma}^2_{\mathrm{e},j} 
    = \frac{\tilde{\beta}_j}{\tilde{\nu}_j(\tilde{\alpha}_j - 1)} 
    = \frac{r_j \beta_j}{r_j \nu_j \cdot r_j(\alpha_j - 1)} 
    = \frac{1}{r_j} \cdot \frac{\beta_j}{\nu_j(\alpha_j - 1)} 
    = \frac{\sigma^2_{\mathrm{e},j}}{r_j}.
\end{align}
\end{proof}

When $r_j < 1$, epistemic uncertainty increases by factor $1/r_j > 1$, reflecting reduced confidence in unreliable experts.

\subsection{MoNIG Fusion Under Reliability Scaling}

Following ~\citeauthor{ma2021trustworthy} \cite{ma2021trustworthy}, the NIG summation operator $\oplus$ fuses $M$ distributions with parameters:
\begin{align}
    \gamma_{\mathrm{f}} &= \frac{\sum_{j=1}^{M} \nu_j \gamma_j}{\sum_{j=1}^{M} \nu_j}, \\
    \nu_{\mathrm{f}} &= \sum_{j=1}^{M} \nu_j, \\
    \alpha_{\mathrm{f}} &= \sum_{j=1}^{M} \alpha_j + \frac{M-1}{2}, \\
    \beta_{\mathrm{f}} &= \sum_{j=1}^{M} \beta_j + \frac{1}{2}\sum_{j=1}^{M} \nu_j (\gamma_j - \gamma_{\mathrm{f}})^2.
\end{align}

\begin{theorem}[Validity of Reliability-Scaled Fusion]
Let $\{(\gamma_j, \nu_j, \alpha_j, \beta_j)\}_{j=1}^{M}$ be $M$ valid NIG distributions with reliability scores $r_j \in (0, 1]$. Then the fused distribution
\begin{align}
    \bigoplus_{j=1}^{M} \mathrm{NIG}(\tilde{\gamma}_j, \tilde{\nu}_j, \tilde{\alpha}_j, \tilde{\beta}_j)
\end{align}
is a valid NIG.
\end{theorem}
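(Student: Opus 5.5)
The plan is to reduce the theorem to the first proposition of this appendix (reliability scaling preserves validity) and then verify (V1)--(V3) directly from the closed-form expressions of the summation operator $\oplus$. By that proposition, each scaled tuple $(\tilde{\gamma}_j,\tilde{\nu}_j,\tilde{\alpha}_j,\tilde{\beta}_j)$ is a valid NIG, with $\tilde{\nu}_j>0$, $\tilde{\alpha}_j>1$ and $\tilde{\beta}_j>0$. The fusion therefore operates on $M$ valid NIGs, and it only remains to check that $\oplus$ maps valid NIGs to a valid NIG. I would also note that $\gamma_{\mathrm f}$ is well defined and real: its denominator is $\sum_j \tilde{\nu}_j>0$, so $\gamma_{\mathrm f}$ is a convex combination of the $\tilde{\gamma}_j$.

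I would then check the three conditions in order. For (V1), $\nu_{\mathrm f}=\sum_j\tilde{\nu}_j$ is a finite sum of positive terms, hence positive. For (V2), $\alpha_{\mathrm f}=\sum_j\tilde{\alpha}_j+\tfrac{M-1}{2}$. Each $\tilde{\alpha}_j>1$, so the sum exceeds $M$, and $\tfrac{M-1}{2}\ge 0$ since $M\ge 1$. This gives $\alpha_{\mathrm f}>M\ge 1$. For (V3), $\beta_{\mathrm f}$ is $\sum_j\tilde{\beta}_j>0$ plus the disagreement term $\tfrac12\sum_j\tilde{\nu}_j(\tilde{\gamma}_j-\gamma_{\mathrm f})^2$. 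That term is nonnegative because $\tilde{\nu}_j>0$ and the squares are nonnegative, so $\beta_{\mathrm f}>0$.

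The only subtle step is (V3). There I must treat the disagreement term correctly as nonnegative, not positive, since it vanishes when all experts agree. Positivity then has to come from the $\tilde{\beta}_j$ term alone, which is exactly what the first proposition supplies. A secondary point is the applicability of the formula: $\oplus$ is defined via the pairwise operator of Ma et al. applied iteratively. So I would remark that the argument could equally be run by induction on $M$, with the binary case being the same three checks. Each intermediate fused NIG is valid, and the next binary fusion preserves validity. Finally, I would conclude that the fused moments $\beta_{\mathrm f}/(\alpha_{\mathrm f}-1)$ and $\beta_{\mathrm f}/(\nu_{\mathrm f}(\alpha_{\mathrm f}-1))$ are finite and positive.
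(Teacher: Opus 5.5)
Your proposal is correct and follows essentially the same argument as the paper: you check (V1)--(V3) directly on the fused parameters, and for (V3) you get positivity from $\sum_j \tilde\beta_j>0$, correctly treating the disagreement term as merely nonnegative. The differences are cosmetic. You invoke the first proposition as a black box and obtain $\alpha_{\mathrm{f}}>M$, whereas the paper substitutes $\tilde\alpha_j = 1+r_j(\alpha_j-1)$ explicitly to get the slightly sharper $\alpha_{\mathrm{f}}>\tfrac{3M-1}{2}$. Your inductive remark about the pairwise operator is an optional extra and is consistent with the $M$-ary closed form.
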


\begin{proof}
We verify each validity condition for the fused parameters.

\textbf{Condition (V1):} 
\begin{align}
    \nu_{\mathrm{f}} = \sum_{j=1}^{M} \tilde{\nu}_j = \sum_{j=1}^{M} r_j \nu_j > 0,
\end{align}
since each term $r_j \nu_j > 0$.

\textbf{Condition (V2):} Substituting the scaled $\tilde{\alpha}_j$:
\begin{align}
    \alpha_{\mathrm{f}} 
    &= \sum_{j=1}^{M} \tilde{\alpha}_j + \frac{M-1}{2} \\
    &= \sum_{j=1}^{M} \left(1 + r_j(\alpha_j - 1)\right) + \frac{M-1}{2} \\
    &= M + \sum_{j=1}^{M} r_j(\alpha_j - 1) + \frac{M-1}{2} \\
    &= \frac{3M - 1}{2} + \sum_{j=1}^{M} r_j(\alpha_j - 1).
\end{align}
Since $r_j > 0$ and $\alpha_j > 1$ for all $j$, we have $\alpha_{\mathrm{f}} > \frac{3M-1}{2} \geq 1$ for $M \geq 1$.

\textbf{Condition (V3):} The fused scale parameter is:
\begin{align}
    \beta_{\mathrm{f}} = \underbrace{\sum_{j=1}^{M} r_j \beta_j}_{> 0} + \underbrace{\frac{1}{2}\sum_{j=1}^{M} \tilde{\nu}_j (\tilde{\gamma}_j - \gamma_{\mathrm{f}})^2}_{\geq 0} > 0.
\end{align}
\end{proof}

\begin{corollary}[Reliability-Weighted Fusion Mean]
Under reliability scaling, the fused mean becomes:
\begin{align}
    \gamma_{\mathrm{f}} = \frac{\sum_{j=1}^{M} r_j \nu_j \gamma_j}{\sum_{j=1}^{M} r_j \nu_j}.
\end{align}
Each expert's contribution is weighted by $r_j \nu_j$, the product of reliability and evidence.
\end{corollary}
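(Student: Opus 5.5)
The statement immediately above is the corollary on the reliability-weighted fusion mean, and it follows by direct substitution. I will take the fused-mean formula of the NIG summation operator $\oplus$ (Eq.~\eqref{eq:fused_mean}, restated in the appendix as $\gamma_{\mathrm{f}}$) and apply it to the reliability-scaled experts $(\tilde{\gamma}_j,\tilde{\nu}_j,\tilde{\alpha}_j,\tilde{\beta}_j)$ rather than the raw ones. The fused mean then reads $\gamma_{\mathrm{f}} = \sum_j \tilde{\nu}_j\tilde{\gamma}_j / \sum_j \tilde{\nu}_j$.

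Next I substitute the scaling rules $\tilde{\gamma}_j=\gamma_j$ (mean preservation) and $\tilde{\nu}_j = r_j\nu_j$. This gives exactly
\begin{align}
\gamma_{\mathrm{f}} = \frac{\sum_{j=1}^{M} r_j\nu_j\gamma_j}{\sum_{j=1}^{M} r_j\nu_j}.
\end{align}
The fused $\gamma_{\mathrm{f}}$ does not depend on $\tilde{\alpha}_j$ or $\tilde{\beta}_j$, so the particular form of the $\alpha$ and $\beta$ scalings plays no role here.

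The only point needing care is that the expression is well defined. The denominator must be nonzero, and this is precisely condition (V1) established in the preceding theorem: $\sum_j r_j\nu_j>0$ because $r_j\in(0,1]$ and $\nu_j>0$. The same positivity shows each weight $w_j = r_j\nu_j/\sum_k r_k\nu_k$ lies in $(0,1)$ and that the weights sum to one. Hence $\gamma_{\mathrm{f}}$ is a genuine convex combination of the expert means, which justifies reading $r_j\nu_j$ as the product of reliability and evidence.

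There is no real obstacle here. The only step to state explicitly is the appeal to the previous theorem for positivity of the total evidence; everything else is substitution.
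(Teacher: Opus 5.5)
Your proposal is correct and follows the paper's (implicit) argument. The paper states the corollary without a separate proof because it is exactly the substitution of $\tilde{\gamma}_j=\gamma_j$ and $\tilde{\nu}_j=r_j\nu_j$ into the MoNIG fused-mean formula, and your appeal to condition (V1) for a positive denominator is a welcome addition; one small correction is that the normalized weights lie in $(0,1]$, not $(0,1)$, since for $M=1$ the single weight equals $1$.
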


\subsection{Summary}

The reliability scaling mechanism in RELIABLE-BA satisfies the following properties:
\begin{enumerate}
    \item Scaled parameters remain valid NIG distributions (Proposition 1).
    \item Aleatoric uncertainty is preserved exactly (Proposition 2).
    \item Epistemic uncertainty scales inversely with reliability (Proposition 3).
    \item MoNIG fusion of reliability-scaled experts yields a valid NIG (Theorem 1).
    \item The fused mean weights experts by reliability $\times$ evidence (Corollary 1).
\end{enumerate}

\section{Dataset Curation}
\label{appendix:dataset}
For PDBbind, after filtering invalid entries and complexes with incomplete predictions from engines, the final data consists of 8,742 unique protein–ligand complexes (7,924 train / 576 validation / 242 test). For BDB2020+, we retained 96 complexes after excluding entries with proteins exceeding embedding length limits or incomplete structural annotations. For the 5HT2A case study, we evaluate on 3,868 ligands with valid affinity annotations and docking engine outputs curated from ChEMBL dataset. For SARS-CoV-2 Mpro dataset, we utilized the curated dataset in LP-PDBBind \cite{Li_2026_LPPDBBind}.

\section{Baseline Architectures and Hyperparameters}
\label{appendix:baselines}

All neural network baselines use the same encoder: a 4-layer network (708 → 512 → 256 → 128 → 64) that takes 4 engine scores and 704-dimensional molecular embeddings as input. We use ReLU activations, 20\% dropout, Adam optimizer with learning rate 0.0005, batch size 64, and stop training if validation MAE doesn't improve for 30 epochs.

\paragraph{RELIABLE-BA} Each docking engine has its own 3-layer head  (1 → 256 → 128 → 64) that maps the engine score to four NIG parameters through separate linear projections. A separate reliability network (704 → 128 → 4) takes molecular embeddings and outputs a reliability score for each engine via sigmoid activation. Both components train jointly with evidential loss ($\lambda=0.001$).

\paragraph{Evidential Regression} Single evidential network that predicts four NIG parameters ($\gamma, \nu, \alpha, \beta$) from all inputs combined. Uses shared encoder with 4 linear output heads. Same evidential loss ($\lambda=0.001$) but without per-engine modeling or reliability weighting.

\paragraph{Gaussian} Predicts mean $\mu$ and variance $\sigma^2$ directly. Uses shared encoder with 2 linear output heads. Trained with Gaussian negative log-likelihood. 

\paragraph{Baseline} Standard regression network without uncertainty estimation. Uses shared encoder with single linear output. Trained with mean squared error loss.

\paragraph{Deep Ensemble MVE} Five independently trained Gaussian networks with different random initializations. Final prediction averages across members. 

\paragraph{MC Dropout} Gaussian network with dropout kept active during prediction. Runs 50 forward passes and estimates uncertainty from variation across passes.

\paragraph{SVGP} Gaussian process with learned deep features. Encoder (without dropout) maps to 64 dimensions, followed by RBF kernel GP with 128 inducing points.

\paragraph{SWAG} Gaussian base model that collects weight statistics after epoch 75. At prediction, samples 30 weight configurations from the approximate posterior.

\paragraph{Consensus Scoring} Computes median across engines, then weights each engine by $\exp(-|\text{deviation from median}|)$.

\paragraph{Ensemble Scoring} Arithmetic mean of engines with standard deviation across engines.

\paragraph{LWE} Linear stacking \citep{breiman1996stacked} with four fixed softmax-normalized scalar weights on raw scores, plus one learnable noise scalar for uncertainty.

\paragraph{Softmax MoE} Mixture-of-experts on the full input. Four expert networks (708 → 128 → 64 → 1) combined by an input-dependent softmax gating network, with a separate network for uncertainty.

\section{Hyperparameter Sensitivity}
\label{appendix:hyperparams}

Table~\ref{tab:lambda} shows RELIABLE-BA performance across different values of the evidential regularization coefficient $\lambda$. Lower $\lambda$ yields substantially better calibration (ECE, NLL), while moderate values ($\lambda{=}0.01$) slightly improve point prediction at the expense of calibration. We select $\lambda=0.001$ as it achieves the best calibration with only a marginal increase in MAE, reflecting our emphasis on well-calibrated uncertainty estimates.

\begin{table}[h]
\centering
\small
\caption{Effect of evidential regularization coefficient $\lambda$ on RELIABLE-BA performance. Results averaged over 20 seeds (std in parentheses).}
\label{tab:lambda}
\resizebox{\columnwidth}{!}{%
\begin{tabular}{ccccc}
\toprule
$\lambda$ & MAE $\downarrow$ & RMSE $\downarrow$ & ECE $\downarrow$ & NLL $\downarrow$ \\
\midrule
0.001 & 0.824 (0.015) &  1.045 (0.021) & \textbf{0.014} (0.003) & \textbf{1.463} (0.023) \\
0.005 & 0.816 (0.019) & 1.039 (0.024) & 0.027 (0.011) & 1.474 (0.030) \\
0.01 & \textbf{0.814} (0.017) & \textbf{1.037} (0.022) & 0.042 (0.011) & 1.491 (0.032) \\
0.05 & 0.823 (0.041) & 1.046 (0.051) & 0.078 (0.019) & 1.596 (0.086) \\
\bottomrule
\end{tabular}%
}
\end{table}

\section{Reliability Score Visualization}
\label{appendix:reliability}

\begin{figure}[h]
\centering
\includegraphics[width=0.8\columnwidth]{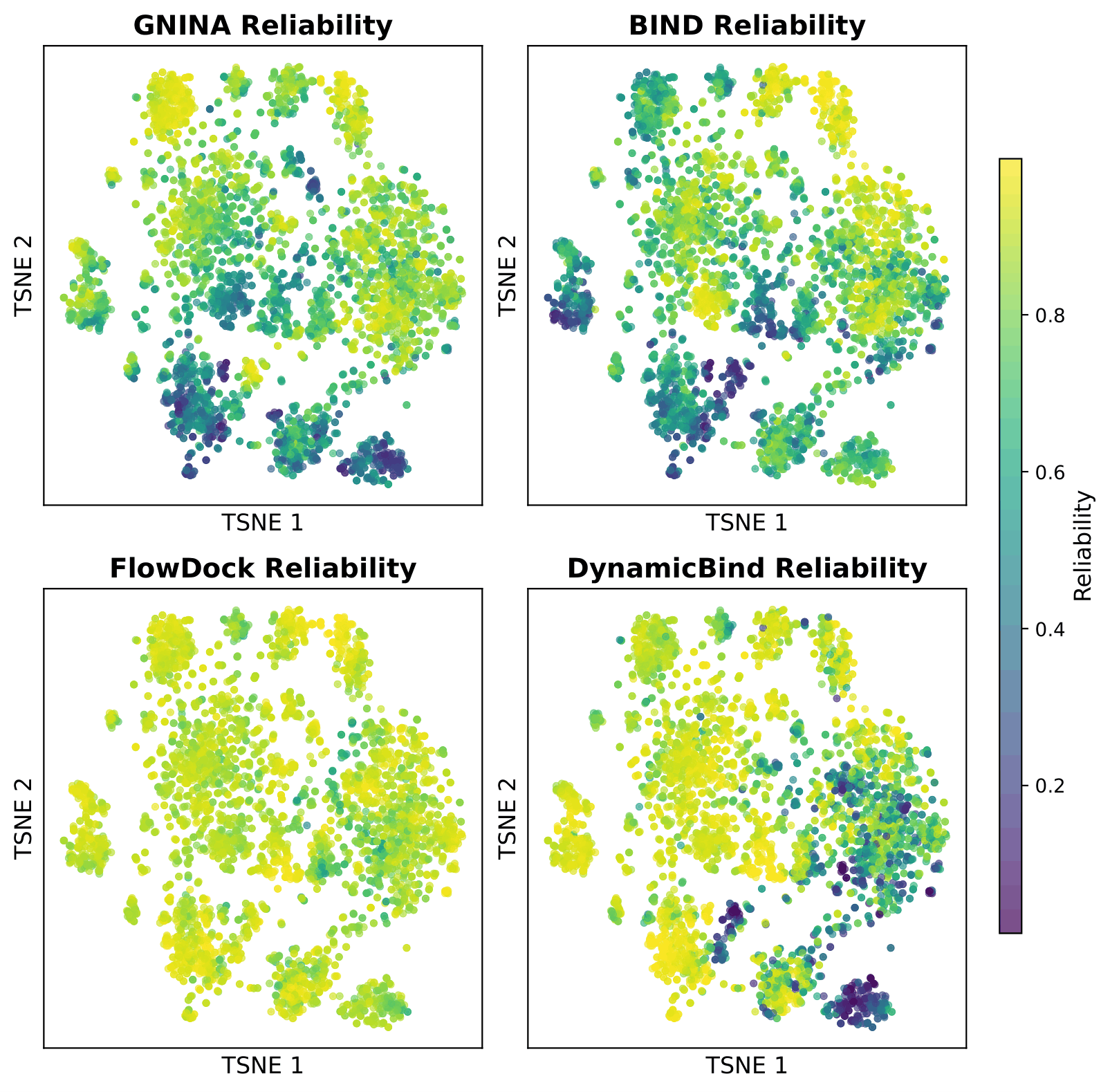}
\caption{t-SNE visualization of protein-ligand embeddings colored by learned reliability scores.}
\Description{t-SNE of protein-ligand embeddings by reliability scores.}
\label{fig:reliability_tsne}
\end{figure}

Figure~\ref{fig:reliability_tsne} visualizes learned reliability scores across protein–ligand embedding space using t-SNE on the PDBBind test set. Each panel shows the reliability assigned to one docking engine, with color indicating reliability (yellow = high, purple = low). FlowDock achieves the highest mean reliability (0.895), followed by DynamicBind (0.81), GNINA (0.719), and BIND (0.706). 

\section{Computational Cost Analysis}
\label{appendix:compute}

Table~\ref{tab:compute} reports training and inference times for all methods on the PDBBind benchmark (7,924 training / 242 test samples). All experiments were conducted on a single NVIDIA RTX 5090 GPU.

\begin{table}[H]
\centering
\small
\caption{Computational cost comparison. Training and inference times in seconds, averaged over 20 runs.}
\label{tab:compute}
\begin{tabular}{lcc}
\toprule
\textbf{Method} & \textbf{Training (s)} & \textbf{Inference (s)} \\
\midrule
Baseline & 9.02 $\pm$ 0.64 & 2.04 $\pm$ 0.00 \\
Gaussian & 10.30 $\pm$ 0.54 & 2.72 $\pm$ 0.02 \\
SWAG & 10.19 $\pm$ 0.28 & 2.73 $\pm$ 0.01 \\
MC Dropout & 13.31 $\pm$ 0.94 & 2.77 $\pm$ 0.01 \\
Evidential Regression & 15.38 $\pm$ 1.06 & 2.74 $\pm$ 0.02 \\
SVGP & 27.85 $\pm$ 2.59 & 2.80 $\pm$ 0.01 \\
Deep Ensemble MVE & 30.14 $\pm$ 1.24 & 2.76 $\pm$ 0.01 \\
\midrule
RELIABLE-BA & 48.38 $\pm$ 4.70 & 2.77 $\pm$ 0.01 \\
\bottomrule
\end{tabular}
\end{table}

RELIABLE-BA has higher total training time (48s) due to per-engine modeling, but training completes in under one minute and inference (11ms per complex) matches all other methods. In practice, upstream docking engines dominate computational cost (seconds to minutes per complex), making RELIABLE-BA's overhead negligible.

\end{document}